\newcommand{\ra}[1]{\renewcommand{\arraystretch}{#1}}
\DeclareMathOperator*{\argmin}{argmin}
\DeclareMathOperator*{\argmax}{argmax}
\renewcommand{\vec}[1]{\bm{#1}}
\newtheorem{claim}{Claim}
\title{\LARGE \bf
Forward Kinematics Kernel for Improved Proxy Collision Checking
}
\author{Nikhil Das and Michael C. Yip, \IEEEmembership{Member, IEEE}% <-this % stops a space
%\thanks{*This work was not supported by any organization}% <-this % stops a space
\thanks{Nikhil Das and Michael C. Yip are with the Department of Electrical and Computer Engineering, University of California San Diego
        {\tt\small \{nrdas,yip\}@ucsd.edu}}%
}
\begin{document}

\maketitle
\thispagestyle{empty}
\pagestyle{empty}

%%%%%%%%%%%%%%%%%%%%%%%%%%%%%%%%%%%%%%%%%%%%%%%%%%%%%%%%%%%%%%%%%%%%%%%%%%%%%%%%
\begin{abstract}
Kernel functions may be used in robotics for comparing different poses of a robot, such as in collision checking, inverse kinematics, and motion planning. These comparisons provide distance metrics often based on joint space measurements and are performed hundreds or thousands of times a second, continuously for changing environments. Few examples exist in creating new kernels, despite their significant effect on computational performance and robustness in robot control and planning.  We  introduce  a  new  kernel  function  based  on forward  kinematics  (FK)  to compare  robot  manipulator configurations. We integrate our new FK kernel  into  our proxy collision checker, Fastron, that previously showed significant speed improvements to collision checking and motion planning. With the new FK kernel, we realize  a two-fold speedup  in proxy collision check speed, 8 times less memory, and a boost in classification accuracy from 75\% to over 95\% for a 7 degrees-of-freedom robot arm compared to the previously-used radial basis function kernel.  Compared  to  state-of-the-art  geometric collision  checkers,  with the FK kernel, collision checks are now 9 times faster. To show the broadness of the approach, we apply Fastron FK in OMPL across a wide variety of motion planners, showing unanimously faster robot planning.%We introduce a new kernel function based on forward kinematics (FK) for comparing robot manipulator configurations. We demonstrate how the FK kernel improves performance for the Fastron algorithm, an algorithm which creates a configuration space model for a robot which may be used for faster proxy collision detection and sampling-based motion planning. We integrate our new FK kernel into the Fastron algorithm and realize a two-fold speedup in proxy collision check speed, 8 times smaller model sizes, and a boost in accuracy from 75\% to over 95\% for a 7 degrees-of-freedom robot arm compared to the previously-used radial basis function kernel. Furthermore, compared to state-of-the-art geometric collision checkers, our kernel allows the Fastron model to provide collision check results 9 times faster and collision-free motion plans using OMPL up to 3 times faster. All results were achieved using only CPU-based calculations.
\end{abstract}

%%%%%%%%%%%%%%%%%%%%%%%%%%%%%%%%%%%%%%%%%%%%%%%%%%%%%%%%%%%%%%%%%%%%%%%%%%%%%%%%
\section{Introduction}
Motion planning for robotics is the task of finding a sequence of feasible robot states between a start and goal state. Feasible robot states are those that satisfy problem-specific constraints, generally one of which is to remain out of collision from objects in the environment. Motion planning for a robot is often performed in its configuration space (C-space), a space in which each element represents a unique configuration of the robot. Each configuration in the robot's C-space can belong to one of two subspaces: $\mathcal{C}_{free}$ or $\mathcal{C}_{obs}$. A configuration is in the collision-free $\mathcal{C}_{free}$ subspace if the robot is not in contact with any workspace obstacle when in that corresponding configuration; otherwise, the configuration is in the in-collision $\mathcal{C}_{obs}$ subspace \cite{Choset2005, Lozano1983}. For robot manipulators with many controllable degrees-of-freedom (DOF), the dimensionality of the C-space may be large as each joint contributes to defining the robot's configuration. An example workspace and its corresponding C-space for a 3 DOF robot arm are shown in Fig. \ref{fig:workspace}-\ref{fig:gtCspace}.

Determining feasibility of a robot state usually involves discrete collision checking to determine whether the query robot state would intersect with an environmental obstacle \cite{Pan2015}. Repeated collision checking is computationally expensive and takes up to 90\% of the computation time during motion planning \cite{Elbanhawi2014}. An open challenge exists for real-time, accurate, dynamic robot planning in changing environments; to meet this need, we seek to create an accurate C-space model that may be used as a faster proxy to the more computationally-burdensome geometric collision checkers upon which most motion planners currently rely.%We seek to accurately model the C-space to allow for proxy collision checks that are faster than the more computationally-burdensome geometric collision checkers.

\begin{figure}[t]
	\centering
    \subfloat[Workspace\label{fig:workspace}] {
        \includegraphics[width=0.48\linewidth, trim={2cm 2cm 1cm 0.5cm}, clip]{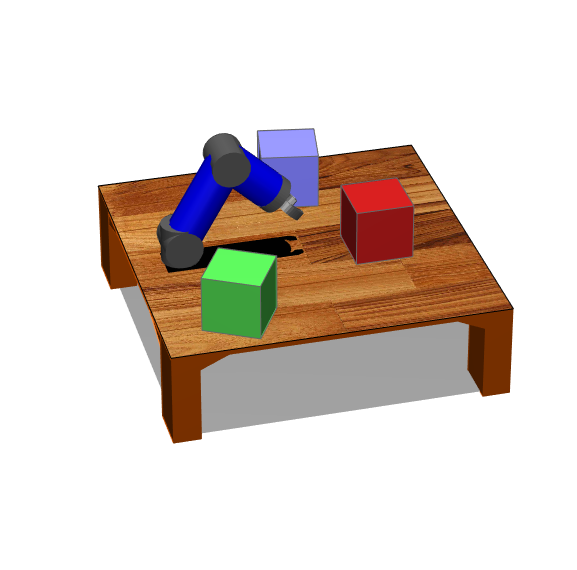}}
		\hfill
    \subfloat[Ground truth\label{fig:gtCspace}] {
        \includegraphics[width=0.48\linewidth, trim={0cm 1cm 0cm 1.5cm}, clip]{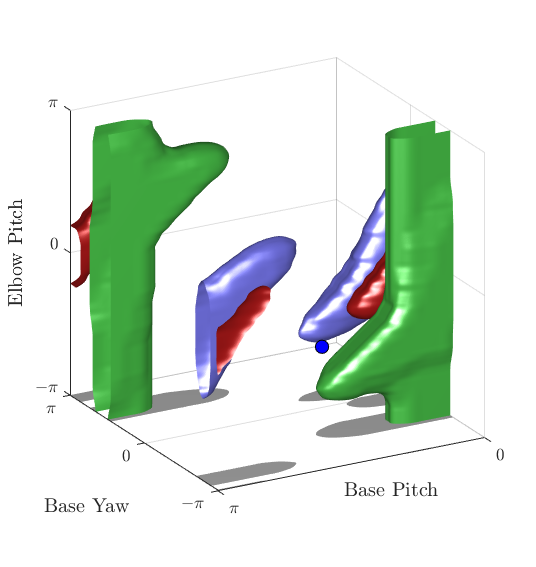}}
	\hfill
	\subfloat[Fastron RBF\label{fig:rbfCspace}] {
        \includegraphics[width=0.48\linewidth, trim={0cm 1cm 0cm 1.5cm}, clip]{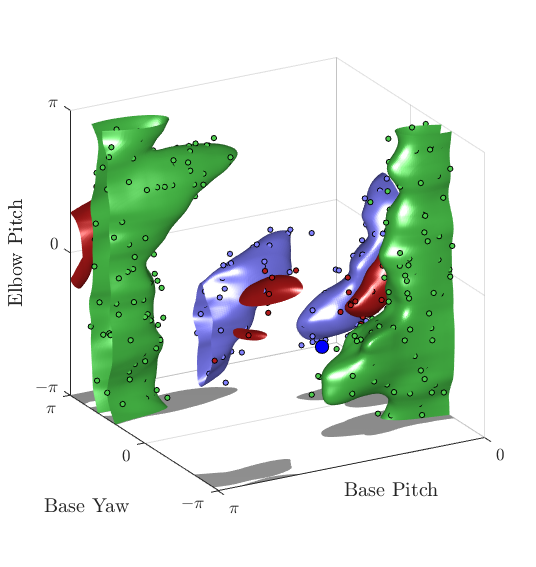}}
		\hfill
	\subfloat[Fastron FK\label{fig:fkCspace}] {
        \includegraphics[width=0.48\linewidth, trim={0cm 1cm 0cm 1.5cm}, clip]{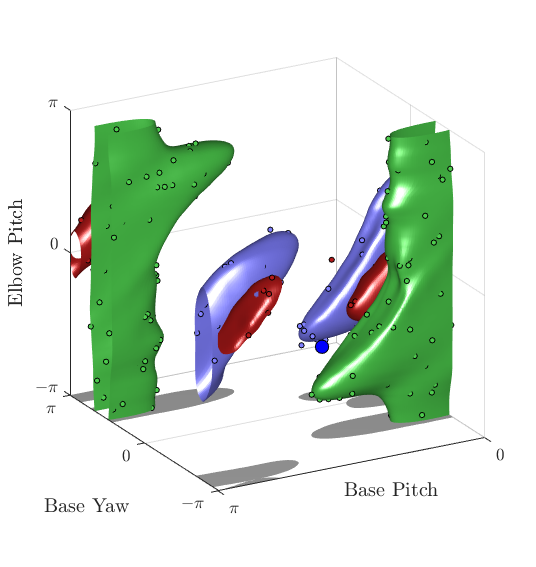}}
    \caption{(a) A 3 DOF robot and multiple cube obstacles. (b) The ground truth C-space model determined by calling a geometric collision detector on a grid of samples. (c) A Fastron model using the RBF kernel from prior work. (d) A Fastron model using the proposed FK kernel. Both versions of the Fastron models are trained on the same data, and the resulting support points for each model are shown. The Fastron FK model closely matches the ground truth model while the Fastron RBF model is uneven and requires many more support points.}
    \label{fig:fastronCompare}
\end{figure}

\subsection{Contributions}
In this paper, we present a new kernel function, or similarity function, for improved configuration space modeling for robot manipulators. We show improved proxy collision checking performance in terms of both collision check times and accuracy with the new kernel, and we demonstrate the kernel's usefulness in accelerating motion planning.

In previous work, we described the Fastron algorithm, which creates a nonparametric model inspired by the kernel perceptron algorithm and efficiently checks for changes in the C-space due to moving workspace obstacles \cite{Das2019}. The input into the Fastron model is a robot configuration, and the output is its predicted collision status. In previous work, a radial basis function (RBF) was used for the kernel function, which isotropically compares the joint values of a robot manipulator. While the RBF kernel is an adequate general-purpose kernel function, it fails to capture the influence each joint has in the manipulator's workspace. In this work, we replace the RBF kernel function with a new kernel function we call the forward kinematics (FK) kernel. The benefits of replacing the RBF kernel with the FK kernel may be observed in Fig. \ref{fig:rbfCspace}-\ref{fig:fkCspace}, where the Fastron model closely resembles the ground truth with the FK kernel but is uneven and requires many more points to represent the model when using the original RBF kernel.% All theoretical guarantees for the Fastron algorithm still hold with the new kernel.

We show that the FK kernel is positive definite and prove our modeling algorithm can always find a set of weights that correctly predicts the collision status of a training set. We also show that when using the Fastron model as a proxy collision checker, using the FK kernel can provide collision status predictions twice as fast as when using the RBF kernel and requires approximately an order of magnitude fewer support points to represent the model. Using the FK kernel improves the accuracy of collision status prediction to be over 95\% for a 7 DOF robot arm, a significant improvement over the 75\% accuracy achieved with the RBF kernel. We apply the Fastron algorithm with the FK kernel to various motion planners, demonstrating up to 3 times faster motion plans compared to when using other collision checkers.

\begin{figure*}[b]
	\centering
	\subfloat[Example workspace\label{fig:workspaceComparison}]{%
       \includegraphics[width=0.33\linewidth, trim={1cm 0.5cm 0.5cm 1cm}, clip]{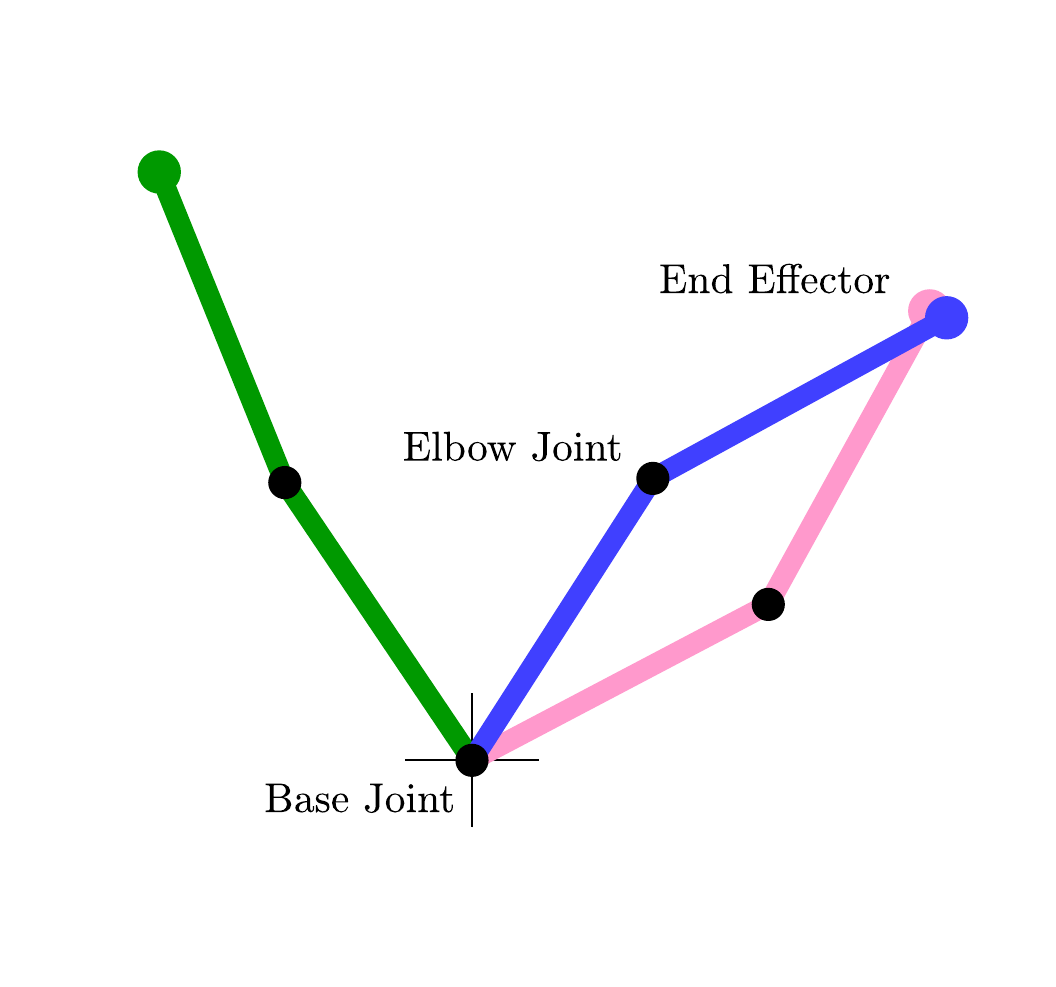}}
    \hfill
    \subfloat[RBF kernel in configuration space\label{fig:rbfKernelComparison}]{%
        \includegraphics[width=0.33\linewidth, trim={0cm 0cm 1cm 0cm}, clip]{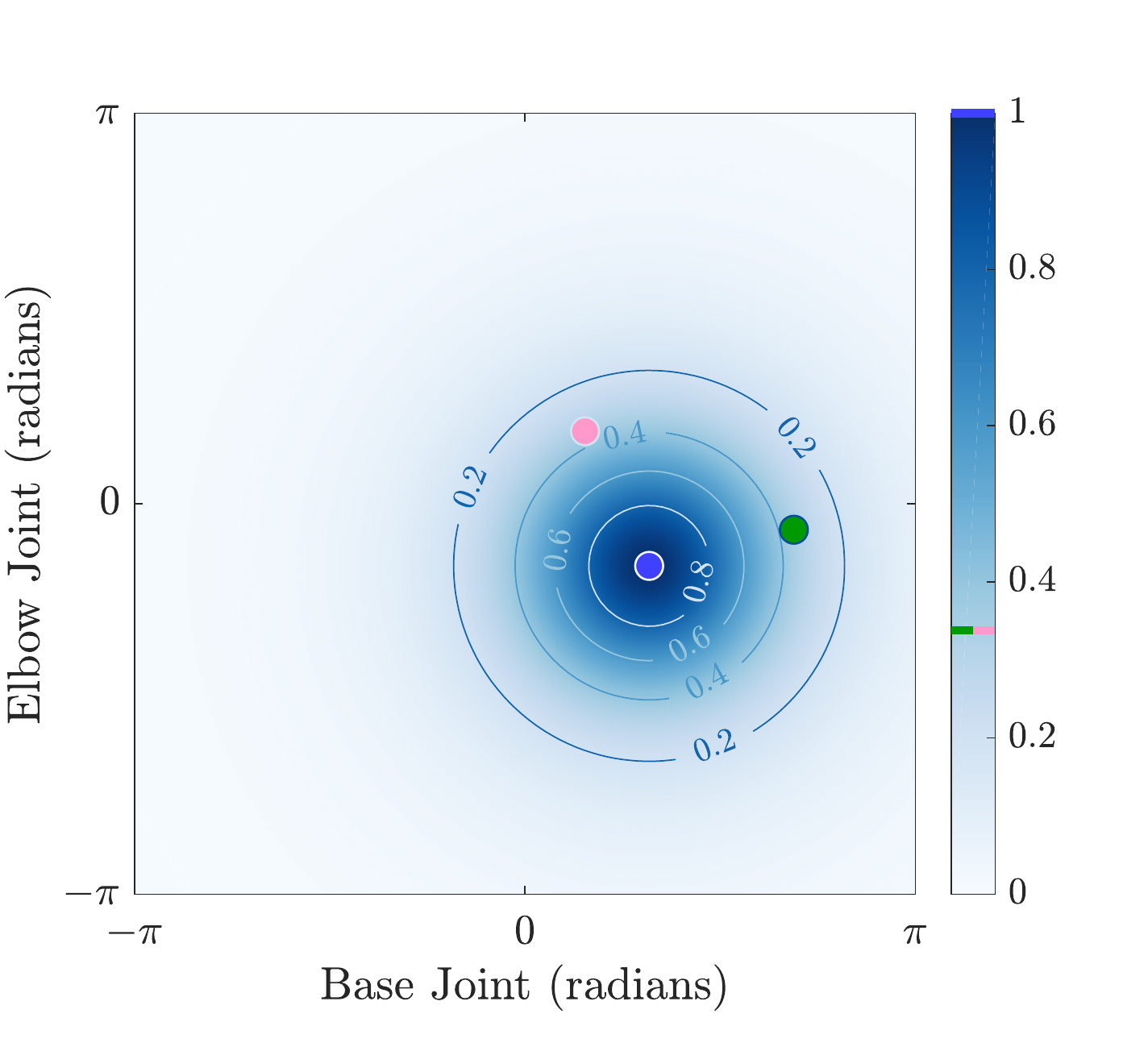}}
        \hfill
    \subfloat[FK kernel in configuration space\label{fig:fkKernelComparison}]{%
        \includegraphics[width=0.33\linewidth, trim={0cm 0cm 1cm 0cm}, clip]{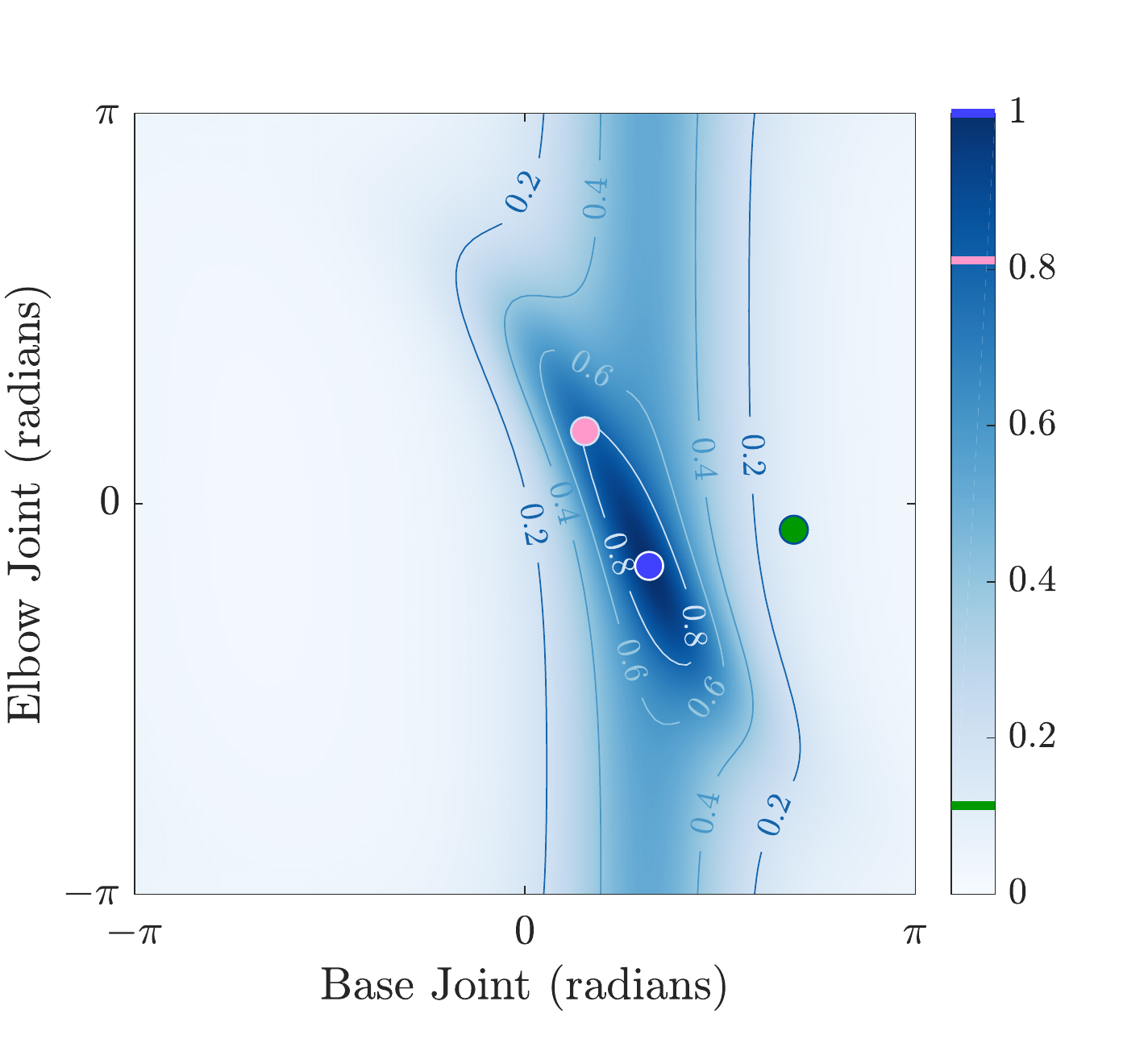}}
    \caption{(a) A workspace representation of one configuration of a two-link arm shown in blue along with two alternate configurations of the same robot shown in green and pink. (b) A visualization of the RBF kernel in C-space centered on the blue configuration. According to the RBF kernel, the blue configuration is as similar to the pink configuration as it is the green configuration, as they reside on the same isocontour. (c) A visualization of the FK kernel in C-space centered on the blue configuration. According to the FK kernel, the blue configuration is more similar to the pink configuration than to the green, which is more representative of the proximities of the blue and pink configurations in the workspace representation.}
    \label{fig:kernelComparison}
\end{figure*}

\subsection{Related Work}
The following works describe proxy checking methods that try to avoid computationally-burdensome collision checking and methods that utilize control points (which are used in the proposed FK kernel) for other kinematic contexts.

\subsubsection{Configuration Space Modeling}
Various machine learning-based techniques have been developed to model C-space or for proxy collision checking, including support vector machine models (SVMs) \cite{Pan2015}, Gaussian mixture models (GMMs) \cite{Huh2016}, k-nearest neighbor models (KNNs) \cite{Pan2016}, and kernel perceptrons \cite{Das2019}. Most of these methods provide either high accuracy but very poor computational speed, or high computational speed but poor accuracy and limited capacity to model changing environments. 

Pan et al.'s SVM method learns a C-space model for a pair of rigid-body objects \cite{Pan2015}. Their method provides near-perfect models because they employ an active learning strategy that iteratively improves the precision of the model, and they use their SVM-based model for distance/penetration depth estimation between two bodies using a constrained nonlinear optimizer. However, to apply this method to a kinematic chain, a new model must be learned offline for each pair of objects, including each link in the kinematic chain and each workspace obstacle. %and thus due to its computational complexity with retraining is more suitable for static environments.

In \cite{Huh2016}, active learning of a GMM model is associated with sample-based motion planners itself by using the samples from a rapidly-exploring random tree (RRT) motion planner to modify the GMM. GMMs are constrained by the fixed dimensionality of the model.% sampling heuristic for modifying the GMM model; the method is constrained by the fixed dimensionality of the GMM model.

%Huh et al. fit GMMs to in-collision and collision-free robot configuration samples to accelerate rapidly-exploring random tree (RRT) motion planners \cite{Huh2016}. The RRT routine is called repeatedly and the states generated during these calls are used to train the model.

KNNs have also been used for faster proxy collision detection for static environments \cite{Pan2016}. Locality-sensitive hashing is utilized for extremely fast query times. However, the problem of rehashing given changing environments makes this unsuitable for changing environments.%This method has not been applied to changing environments.

Our Fastron algorithm uses a kernel perceptron model to globally represent the C-space and uses an active learning strategy to adapt to a changing workspace. It was shown that collision checks may be an order of magnitude faster than state-of-the-art collision detectors \cite{Das2019}. However, the accuracy of the model must be improved to reduce the search time for valid configurations and to reduce the amount of time spent repairing infeasible parts of a motion plan.

While each of these learning-based methods generates an approximate C-space model, the workspace and C-space are treated as completely separate entities and the link that relates the two spaces is more or less neglected. In this work, we aim to link the two spaces via FK.

\subsubsection{Control Points for Collision-Free Motion}
%Forward kinematics for a kinematic chain provides the position of a point along the chain given the joint values. Inverse kinematics determines a set of joint values that achieves a desired end-effector position/orientation.

Control points are a set of hand-selected points placed on a robot and their locations may be used to indicate the position of a rigid-body robot or a kinematic chain manipulator. Control points should be chosen to constrain the configuration of the robot, and the minimum number of these points depends on the number of DOF of the robot \cite{Choset2005}. FK provides the positions of these control points for manipulators given a set of joint values, and we consider FK to be the link that relates a robot's position in C-space to its position in the workspace. %Inverse kinematics provides a set of joint values given a desired control point position.

Artificial potential fields can be used for motion planning in a robot's C-space, where the sum of attractive and repulsive forces drive the robot away from C-space obstacles and toward a goal \cite{Khatib1985}. Potential fields can be applied to kinematic chain manipulators using control points placed along the kinematic chain \cite{Choset2005}.% At least one control point should be placed at each link of the arm since each link is a collidable rigid body \cite{Choset2005}.% Robots using the potential field method are prone to getting stuck in local minima.

RelaxedIK, a motion-synthesis method for manipulators, uses a neural network to estimate the risk of self-collision for an input robot configuration \cite{Rakita2018}. Control points are placed at each joint and their locations in the workspace are included in the input to the network. The use of control points allows estimation of the self-collision costs two orders of magnitude faster than a distance-based method, and optimizing over these self-collision costs allows finding configurations that are free from self-collisions.

In this work, we utilize control points in creating a similarity function between manipulator configurations, and we leverage how FK relates a manipulator's C-space to its workspace for more accurate proxy collision checking.

\section{Background}
\subsection{Forward Kinematics}

FK for a kinematic chain determines the pose (position and orientation) of a point on the chain given the joint values. When using the standard Denavit-Hartenberg (D-H) convention, the homogenous transformation matrix of the pose of joint $i$ in the frame of joint $i-1$ is
\begin{equation}
	\vec{^{i-1}\!A_i} = 
	\begin{bmatrix}
		\mathrm c(\theta_i) & -\mathrm c(\alpha_i)\mathrm s(\theta_i) & \mathrm s(\alpha_i)\mathrm s(\theta_i) & a_i\mathrm c(\theta_i) \\
		\mathrm s(\theta_i) & \mathrm c(\alpha_i)\mathrm c(\theta_i) & -\mathrm s(\alpha_i)\mathrm c(\theta_i) & a_i\mathrm s(\theta_i) \\
		0 & \mathrm s(\alpha_i) & \mathrm c(\alpha_i) & d_i \\
		0 & 0 & 0 & 1 \\
	\end{bmatrix}
\end{equation}
where $d_i$ and $a_i$ are the distances between the joints' origins along the $i^{th}$ $x$-axis and $z$-axis, respectively, $\theta_i$ and $\alpha_i$ are the angles between the joints' $x$-axes measured along the $i^{th}$ $z$-axis and vice versa, respectively, and $\mathrm{c}(\cdot)$ and $\mathrm{s}(\cdot)$ represent $\cos(\cdot)$ and $\sin(\cdot)$, respectively. $\theta_i$ and $d_i$ are the joint variables for revolute and prismatic joints, respectively, and a configuration of a robot with $D$ DOF is represented as a $D$-dimensional vector of these joint variables. The other D-H parameters are fixed depending on the geometry of the robot. Note that when using the standard D-H convention, the origin of the frame of the $i^{th}$ joint is placed at the distal end of the $i^{th}$ link. For more details, see \cite{Rocha2011}.

Starting with the base transform relative to a world frame $\vec{^w\!A_0}$, the pose of the $i^{th}$ joint is $\vec{^w\!A_i} = \!\vec{^w\!A_0}\!\vec{^0\!A_1}\!\dots{\vec{^{i-1}\!A_i}}$, where each transform in the chain is dependent on the robot configuration as described in the previous paragraph. The pose for an arbitrary point on the $i^{th}$ link is found by appending an additional static transformation $\vec T$ to the pose of the $i^{th}$ joint: $\vec{^w\!A_iT}$.

Let $\mathrm{pos}(\cdot)$ extract the position from a homogeneous transformation matrix, i.e., the first three elements of the last column. We can thus represent the position of the $m^{th}$ control point for configuration $\vec x \in  \mathbb{R}^D$ as
\begin{equation}
FK_m(\vec x) =\mathrm{pos}\left(\vec{^w\!A_{j_m}}\vec{T_m}\right)
\end{equation}
where $\vec{T_m}$ is the static transform applied to the the pose of the joint with index $j_m$.

\subsection{Binary Classification and Kernel Functions}
\label{sec:kernelFunctions}
A binary classifier predicts the class label of a query point. A binary classifier model is fit to a dataset $\mathcal{X} = \{\mathcal{X}_1, \ldots, \mathcal{X}_N\}$ and its associated vector of labels $\vec y$. Each $\mathcal{X}_i\in \mathbb{R}^D$ represents a configuration of a robot. We let $\vec{y}_i = +1$ denote $\mathcal{X}_i\in\mathcal{C}_{obs}$ and $\vec{y}_i = -1$ denote $\mathcal{X}_i\in\mathcal{C}_{free}$.

Binary classification models often use kernels to describe nonlinear seperation of classes. A kernel function $K:\mathbb{R}^D \times \mathbb{R}^D \rightarrow \mathbb{R}$ is a similarity function that should provide a large score for similar configurations and a low score for dissimilar configurations. RBF kernels, which are functions of the distance between two points, such as the Gaussian kernel, are popular. In this paper, we use the second-order rational quadratic kernel \cite{Rasmussen2006} for all RBF kernels, defined as
\begin{equation}
	K_{RQ}(\vec{x},\vec{x}') = \left(1+\frac{\gamma}{2}\|\vec{x}-\vec{x}'\|^2\right)^{-2} \label{eq:rbf}
\end{equation}
where $\gamma > 0$ is a parameter dictating the width of the kernel.

Each element of a kernel matrix $\vec{K}$ is defined elementwise for each pair of points in $\mathcal{X}$: $\vec{K}_{ij} = K(\mathcal{X}_i,\mathcal{X}_j)$. $\vec K$ is called positive definite (PD) if $\vec{\alpha}^\mathsf{T}\vec{K}\vec{\alpha} \geq 0\ \forall \vec \alpha \in \mathbb{R}^N$, and equality occurs only for $\vec\alpha = \vec{0}$. $\vec{K}$ is considered to be positive semidefinite (PSD) if equality may occur for $\vec\alpha\neq \vec{0}$.

%A PD kernel is one that yields a PD kernel matrix $\vec K$ when each configuration in set $\mathcal{X}$ is unique \cite{Hofmann2008}. Positive definiteness is an important property for a kernel function because it guarantees nonsingularity for the kernel matrix and represents an implicit mapping to a higher dimensional feature space according to Mercer's theorem \cite{Minh}. $K_{RQ}(\cdot, \cdot)$ is an example of a PD kernel \cite{Rasmussen2006}. Summing PD kernels or multiplying PD kernels with a scalar also yields a PD kernel \cite{Hofmann2008}.

A PD kernel is one that yields a PD kernel matrix $\vec K$ when each configuration in set $\mathcal{X}$ is unique \cite{Hofmann2008}. Positive definiteness is an important property for a kernel function because it guarantees nonsingularity for the kernel matrix and represents an implicit mapping to a higher dimensional feature space according to Mercer's theorem \cite{Minh}. PSD kernels and matrices do not have the same guarantees their PD counterparts have. $K_{RQ}(\cdot, \cdot)$ is an example of a PD kernel \cite{Rasmussen2006}. Summing PD kernels or multiplying PD kernels with a scalar also yields a PD kernel \cite{Hofmann2008}.

%Positive definiteness is an important property for a kernel function because it guarantees nonsingularity for the kernel matrix and represents an implicit mapping to a higher dimensional feature space according to Mercer's theorem \cite{Minh}. A kernel matrix $\vec{K}$ is PD if $\vec{\alpha}^\mathsf{T}\vec{K}\vec{\alpha}\geq 0\ \forall \vec \alpha \in \mathbb{R}^N$ and equality occurs only for $\vec\alpha = \vec{0}$. $\vec{K}$ is considered to be positive semidefinite (PSD) if equality may occur for $\vec\alpha\neq \vec{0}$. PSD kernels and matrices do not have the same guarantees their PD counterparts have. %PSD matrices do not have the nonsingularity guarantee that PD matrices have.% A kernel matrix may be PSD if elements upon which the kernel functions are evaluated are not unique.

%A positive semidefinite (PSD) kernel $K(\vec{x}_i,\vec{x}_j)$ is one that is nonnegative for $\sum_{i,j}\vec\alpha_i\vec\alpha_jK(\vec{x}_i,\vec{x}_j)\ \forall \vec\alpha$ \cite{Hofmann2008}. $K_{RQ}(\cdot, \cdot)$ is an example of a PSD kernel [cite Rasmussen]. PSD kernels yield a positive definite matrix $\vec K$ where $\vec{K}_{ij} = K(\mathcal{X}_i,\mathcal{X}_j)$ and each configuration in set $\mathcal{X}$ is unique \cite{Minh}. Summing PSD kernels or multiplying PSD kernels with a scalar also yields a PSD kernel \cite{Hofmann2008}. %{ and is the inner product of $x_i$ and $x_j$ in some feature space \cite{Minh}. Summing PSD kernels or multiplying PSD kernels with a scalar also yields a PSD kernel \cite{Hofmann2008}.

\section{Methods}
\subsection{Forward Kinematics Kernel}
\subsubsection{Definition}
The major insight of this paper is in a more intuitive distance metric by computing distances in the workspace rather than in the joint space. Computing distances directly between joint configurations does not take into consideration the robot's position in the workspace. Consider the blue two-link arm shown in Fig. \ref{fig:workspaceComparison}. Comparing the pink and green configurations to the blue, the pink configuration appears much closer to the blue than the green does. However, the distance in the C-space between the blue and pink configurations (represented as colored dots) is equal to that of the blue and green configurations as can be seen in Fig. \ref{fig:rbfKernelComparison} and \ref{fig:fkKernelComparison}. Consequently, according to the RBF kernel, the blue configuration is as similar to the pink configuration as it is the green configuration as shown in Fig. \ref{fig:rbfKernelComparison}.

%As RBF kernels are functions of the distance between two points, according to the RBF kernel applied in C-space of the robot shown in Fig. \ref{fig:workspaceComparison}, the blue configuration is as similar to the pink configuration as it is the green configuration (with similarity scores of approximately 0.35) despite the apparent differences in the workspace. We thus desire a kernel that takes into account the distances in the workspace.

Rather than comparing the joint values of two configurations directly, we use FK and compare distances in the workspace between control points on the arm. We can thus define the FK kernel as a sum of RBF kernels evaluated between each corresponding control point location:
\begin{equation}
K_{FK}(\vec{x}, \vec{x}') = \frac{1}{M}\sum_{m=1}^MK_{RQ}\left(FK_m(\vec{x}), FK_m(\vec{x}')\right) \label{eq:fkKernel}
\end{equation}
where $FK_m(\cdot)$ provides the position of the $m^{th}$ control point and $M$ is the number of control points representing the kinematic chain. % The FK kernel is PSD by construction because sums of PSD kernels are also PSD as established in Section \ref{sec:kernelFunctions}.

In this paper, we only place control points at joint frame origins (using the standard D-H convention) by setting $\vec{T_m} = \vec I$ for each control point. In other words, a control point is placed at the distal end of each link. However, control points may be placed at arbitrary locations along the manipulator if desired. If the origins of the frames of two subsequent joints in the kinematic chain are coincident, we only place one control point at the overlapping origin. In other words, using the standard D-H convention, if a revolute joint has both $d_i$ and $a_i$ equal to 0, a control point is not placed for this joint as its origin overlaps with the preceding joint origin.

Fig. \ref{fig:fkKernelComparison} provides a visualization of the FK kernel for the blue two-link arm in Fig. \ref{fig:workspaceComparison}, where control points are placed at the location labeled ``Elbow Joint'' and ``End Effector''. According to the FK kernel, the similarity score between the blue and pink configurations is about 0.8, while the similarity between the blue and green is about 0.1. The similarity scores produced by the FK kernel better represent the differences we perceive in the positions of the robots in the workspace.

%Since the kernel is positive semidefinite, its kernel matrix is invertible and thus the Fastron algorithm is guaranteed to eventually yield a positive margin for all training samples.

%We compute and store the forward kinematics for each configuration added to the dataset. This saves time during classification time, because the forward kinematics of each support point will already be computed and only the forward kinematics of the query point has to be computed. 

\subsubsection{Positive Definiteness}
As previously mentioned, a PD kernel guarantees a PD and nonsingular kernel matrix and represents an implicit mapping to a feature space. The RBF kernel defined in Eq. \ref{eq:rbf} evaluated on dataset $\mathcal{X}$ will give rise to a PD kernel matrix as long as each configuration in $\mathcal{X}$ is unique. We show below this is also true for the FK kernel. %, which is almost surely the case when $\mathcal{X}$ is selected via random sampling.

%If each point in set $\mathcal{X}$ upon which kernel function $K(\cdot, \cdot)$ is evaluated is unique, the kernel matrix $\vec{K}$ is PD if $K(\cdot, \cdot)$ is a PD kernel \cite{Hofmann2008}. The RBF kernel defined in Eq. \ref{eq:rbf} will give rise to a PD kernel matrix as long as each point in $\mathcal{X}$ is unique, which is almost always the case when $\mathcal{X}$ is chosen via random sampling.

The FK kernel matrix based on the FK kernel defined in Eq. \ref{eq:fkKernel} is a sum of RBF kernel matrices evaluated on $M$ control points: $\vec K = \frac{1}{M}\sum_i^M\vec{K_m}$. We henceforth refer to the RBF kernel matrices that comprise the FK kernel matrix as \textit{summand kernel matrices}. The $m^{th}$ summand kernel matrix $\vec{K_m}$ is PD as long as the locations of the $m^{th}$ control point (determined by applying $FK_m(\cdot)$ to the configurations in $\mathcal{X}$) are unique. However, depending on the choices of control points or kinematic redundancy, it is possible for two unique configurations in $\mathcal{X}$ to yield the same control point location. For example, Fig. \ref{fig:psdKernelWorkspace} shows four unique configurations of a two-link arm, but there are only three unique locations for each control point (represented as a hollow square and circle on each configuration). With coincident control point locations, the summand kernel matrices are demoted to PSD.

%Consider the scenario in Fig. ... [show example of PSD matrices].

Despite possibly having PSD summand kernels, the \textit{FK kernel matrix is PD} if the intersection of the nullspaces of the summand matrices is the zero vector.

\begin{figure}[!t]
	\centering
	\includegraphics[width=0.8\linewidth, trim={.3cm 0.6cm 0.cm 0.3cm}, clip]{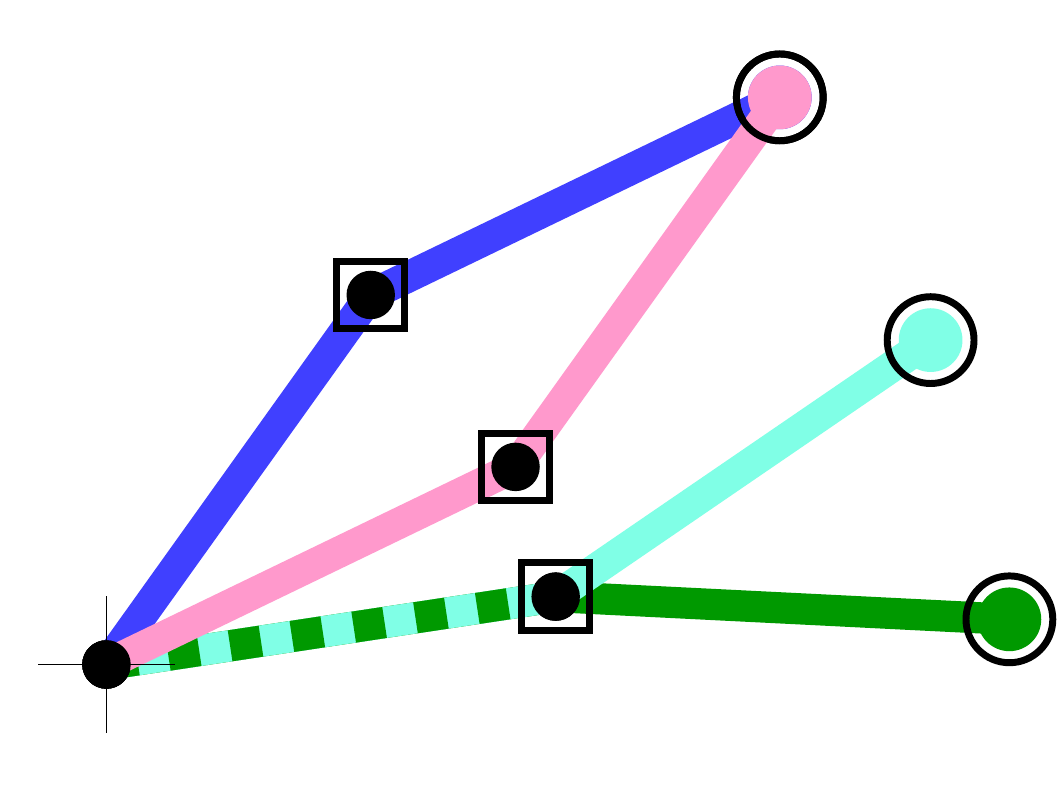}
	%{\includegraphics[width=\linewidth, trim={0cm 1.5cm 0.1cm 4.4cm}, clip]{media/psdKernelWorkspace2.pdf}}
	\caption{Four unique configurations of a two-link arm. There are two control points selected along the arm, marked by the square and hollow circle. Overlapping links are marked with a dashed line. Despite being unique configuations, some control points overlap, causing summand kernel matrices to be positive semidefinite. However, the FK kernel is positive definite as long as the configurations with coincident control points are distinguishable based on the other control points.}
	\label{fig:psdKernelWorkspace}
\end{figure}

%The FK kernel is positive definite because the locations of pair of control points makes each configuration distinguishable. As the locations of the pair of control points makes each configuration distinguishable, the FK kernel is positive definite.

%Each $K_i$ are based not on the configurations in $\mathcal{X}$ directly but instead on these configurations transformed by 
%For the FK kernel, the RBF kernel is applied not to configurations of $\mathcal{X}$ but instead to configurations transformed by $FK(\cdot)$. 

%If the set of points X upon which K is based is unique, K is PD \cite{Hofmann2008}. However, if the set of points not unique, the columns of K are not unique, which means K will not be full rank. Depending on the manipulator and the control point selection, multiple unique configurations may result in the same control point location, which is known as redundant configurations [cite]. Depending on the configurations in $\mathcal{X}$, some control point locations may coincide, which means some individual summand matrices may be PSD.

%There are two cases for the summand kernel matrices that comprise the FK kernel matrix:
%\begin{enumerate}
%\item At least one summand kernel matrix is PD
%\item No summand kernel matrix is PD
%\end{enumerate}
%
%The first case will always result in a PD FK kernel matrix. The second case will result in a PD matrix only if the intersection of the nullspaces of the summand matrices is the zero vector.

\begin{claim}
The FK kernel matrix $\vec{K}=\frac{1}{M}\sum_{m=1}^M{\vec{K_m}}$ is PD if the nullspaces of each summand kernel matrix $\vec{K_m}$ share only the zero vector, $\bigcap_{m=1}^MN(\vec{K_m}) = \vec{0}$.
%The FK kernel matrix $\vec{K}=\frac{1}{M}\sum_{m=1}^M{\vec{K_m}}$ is PD if the intersection of the nullspaces of each summand kernel matrix $\vec{K_m}$ contains only the zero vector, $\bigcap_{m=1}^MN(\vec{K_m}) = \vec{0}$.
\label{claim:pdFKkernel}
\end{claim}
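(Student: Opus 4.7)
The plan is to reduce PDness of the sum to a statement about the kernel of each summand and then invoke the hypothesis on nullspace intersection. First, I would write the defining quadratic form
\begin{equation}
\vec{\alpha}^\mathsf{T}\vec{K}\vec{\alpha} = \frac{1}{M}\sum_{m=1}^M \vec{\alpha}^\mathsf{T}\vec{K_m}\vec{\alpha},
\end{equation}
and observe that each $\vec{K_m}$ is at least PSD because it is an RBF (rational quadratic) kernel matrix evaluated on a finite set of points in $\mathbb{R}^3$, which is PSD regardless of whether the points are distinct. Hence each summand $\vec{\alpha}^\mathsf{T}\vec{K_m}\vec{\alpha}$ is nonnegative, and so is the entire sum. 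This already gives PSDness of $\vec{K}$ for free; the nontrivial part is ruling out equality when $\vec{\alpha}\neq\vec{0}$.

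Next I would use the standard fact that, for any PSD matrix $A$, one has $\vec{\alpha}^\mathsf{T} A \vec{\alpha}=0$ if and only if $A\vec{\alpha}=\vec{0}$, i.e.\ $\vec{\alpha}\in N(A)$. The quickest justification is via the symmetric square root: write $A = B^\mathsf{T}B$, so $\vec{\alpha}^\mathsf{T}A\vec{\alpha} = \|B\vec{\alpha}\|^2$, which vanishes iff $B\vec{\alpha}=\vec{0}$ iff $A\vec{\alpha}=\vec{0}$. I would state this once as a lemma/remark and then apply it to each $\vec{K_m}$.

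Then the argument closes cleanly. Suppose $\vec{\alpha}^\mathsf{T}\vec{K}\vec{\alpha}=0$ for some $\vec{\alpha}\in\mathbb{R}^N$. Since each summand is nonnegative and they sum to zero, every summand must vanish: $\vec{\alpha}^\mathsf{T}\vec{K_m}\vec{\alpha}=0$ for all $m$. By the PSD fact above, this forces $\vec{\alpha}\in N(\vec{K_m})$ for every $m$, hence
\begin{equation}
\vec{\alpha}\in\bigcap_{m=1}^M N(\vec{K_m}) = \{\vec{0}\},
\end{equation}
so $\vec{\alpha}=\vec{0}$. This is precisely the definition of PDness of $\vec{K}$.

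I do not foresee a real obstacle here; the whole argument is a direct consequence of PSDness of each $\vec{K_m}$ together with the nullspace hypothesis. The only point that needs care is making explicit why each $\vec{K_m}$ is PSD (not merely PD) when control points coincide, so that the ``nonnegative summands summing to zero'' step is justified — this is where the paper's preceding discussion about $K_{RQ}$ being PD on unique points, and merely PSD when locations repeat, needs to be cited rather than glossed over.
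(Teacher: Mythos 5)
Your proof is correct, and it actually runs in the opposite direction from the one the paper writes down --- and it is the direction that the claim as stated requires. The paper's proof establishes the inclusion $\bigcap_{m=1}^M N(\vec{K_m}) \subseteq N(\vec{K})$ (any vector annihilated by every summand is annihilated by the sum) and then remarks that PDness forces $N(\vec{K})=\vec{0}$; taken literally this shows the nullspace condition is \emph{necessary} for $\vec{K}$ to be PD, not that it is \emph{sufficient}. What is needed for sufficiency is the reverse inclusion $N(\vec{K}) \subseteq \bigcap_{m=1}^M N(\vec{K_m})$ --- equivalently, that $\vec{\alpha}^\mathsf{T}\vec{K}\vec{\alpha}=0$ forces $\vec{\alpha}$ into every summand's nullspace --- and that is exactly what your argument supplies, via the two facts that each $\vec{K_m}$ is PSD (so the summands of the quadratic form are nonnegative and a zero sum forces each to vanish) and that for a PSD matrix $A$ one has $\vec{\alpha}^\mathsf{T}A\vec{\alpha}=0$ iff $A\vec{\alpha}=\vec{0}$ (via $A=B^\mathsf{T}B$). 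Your version is therefore the more complete one: it closes the logical gap in the published proof while using only the PSDness of the rational-quadratic summand matrices that the paper already discusses in the surrounding text. The one presentational point worth keeping from the paper is its explicit observation that the hypothesis is also necessary; combining your sufficiency argument with the paper's easy inclusion would give the clean ``if and only if'' statement.
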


\begin{proof}
Assume $\vec v \in \bigcap_{m=1}^MN(\vec{K_m})$. Thus, $\vec{K_1}\vec{v} + \dots + \vec{K_M}\vec{v} = M\vec{Kv} = 0\Rightarrow \bigcap_{m=1}^MN(\vec{K_m}) \subseteq N(\vec{K})$.
%Assume $\vec v$ is in the intersection of the summand kernel matrix nullspaces, $\vec v \in \bigcap_{m=1}^MN(\vec{K_m})$. Thus, $\vec{K_1}\vec{v} + \dots + \vec{K_M}\vec{v} = M\vec{Kv} = 0\Rightarrow \bigcap_{m=1}^MN(\vec{K_m}) \subseteq N(\vec{K})$.
%Assume $\vec v$ is in the intersection of the nullspaces of each summand kernel matrix, $\vec v \in \bigcap_{m=1}^MN(\vec{K_m})$. Thus, $\vec{K_1}\vec{v} + \dots + \vec{K_M}\vec{v} = M\vec{Kv} = 0\Rightarrow \bigcap_{m=1}^MN(\vec{K_m}) \subseteq N(\vec{K})$.

For $\vec{K}$ to be PD, it must also be nonsingular, which means $N(\vec{K})=\vec 0$. Thus, $\bigcap_{m=1}^MN(\vec{K_m}) = \vec 0$.
%For $\vec{K}$ to be PD, it must also be nonsingular, which means $N(\vec{K})$ can only contain the zero vector, i.e., $\vec K\vec v = 0$ if and only if $\vec v = \vec 0$. Thus, $\bigcap_{m=1}^MN(\vec{K_m}) = \vec 0$.
\end{proof}

What Claim \ref{claim:pdFKkernel} means intuitively is that even if some control point locations coincide for two unique configurations, the FK kernel is PD if these configurations are distinguishable using the locations of the other control points. For example, in Fig. \ref{fig:psdKernelWorkspace}, the locations of the control points marked with hollow circles differentiate the configurations with coincident control points marked with squares, and vice versa. Note that in practice, if the configurations in $\mathcal{X}$ are generated via random sampling, control point locations should almost surely be unique.

\subsection{Fastron Model and Algorithm}
The Fastron algorithm (described in detail in \cite{Das2019}) generates a binary classification model that is used for proxy collision checking. The Fastron model is a weighted sum of kernel functions: $f(\vec x) = \sum_{i=1}^{N} K(\mathcal{X}_i, \vec x)\vec{\alpha}_i$, where $\vec \alpha\in \mathbb{R}^N$ is a vector of model weights, $\mathcal{X}$ is a set of $N$ configurations, and $K(\cdot, \cdot)$ is a PD kernel. Proxy collision detection is performed by determining the sign of $f(\vec x)$, where $f(\vec x)>0$ denotes the query configuration $\vec x$ is predicted to be in $\mathcal{C}_{obs}$ and $f(\vec x)<0$ denotes $\mathcal{C}_{free}$. $\vec x$ is considered to be correctly classified if its margin $m(\vec x) = yf(\vec x)$ is positive, where $y$ is its true label.

The weight vector $\vec\alpha$ is found by minimizing the loss
\begin{equation}
\mathcal{L}(\vec\alpha) = \frac{1}{2}\vec\alpha^\mathsf{T}\vec K \vec\alpha - \vec y^\mathsf{T} \vec B \vec\alpha
\end{equation}
subject to $\vec y_i\vec\alpha_i \geq 0\ \forall i$, where $\vec y$ is a vector of labels for dataset $\mathcal{X}$, $\vec K$ is the kernel matrix for $\mathcal{X}$, and $\vec B$ is a diagonal matrix where the $i^{th}$ diagonal element is $b_i=\beta^{0.5(\vec y_i + 1)}$ and $\beta\geq 1$ is a user-selected conditional bias parameter, which biases the model toward predicting $\mathcal{C}_{obs}$ for more conservative  predictions. This loss function is similar to that used in SVMs \cite{Cortes1995}. The Fastron algorithm utilizes greedy coordinate descent and exits as soon as the margin $m(\mathcal{X}_i) = \vec y_i\vec F_i > 0\ \forall \mathcal{X}_i\in \mathcal{X}$, where $\vec F_i = f(\mathcal{X}_i)$. Some advantages of greedy coordinate descent we realize are it promotes model sparsity (and thus computational efficiency), allows us to perform lazy kernel matrix evaluation (i.e., a column of $\vec K$ can be computed and stored only when needed), and allows us to define a cheap update rule (described below) \cite{Das2019}.

%In what follows, parenthetical superscripts denote the training iteration upon which the given value depends. On iteration $n$, $\mathcal{L}(\vec\alpha^{(n)})$ is minimized along the $i^{th}$ axis by setting $\vec\alpha_i^{(n+1)}$ to the solution of $\frac{\partial \mathcal{L}(\vec \alpha^{(n)})}{\partial \vec \alpha_i^{(n)}}=0$. $\vec \alpha$ and $\vec F$ can thus be determined incrementally as follows:
In what follows, parenthetical superscripts denote the training iteration upon which the given value depends. On iteration $n$, $\vec \alpha^{(n+1)}$ and $\vec F^{(n+1)}$ are determined incrementally:
\begin{align}
&\Delta \vec\alpha_i^{(n)} = b_i\vec y_i - \vec F_i^{(n)}\\
&\vec\alpha^{(n+1)} = \vec\alpha^{(n)} + \Delta \vec\alpha_i^{(n)}\vec{e(i)}  \label{updateRule} \\
&\vec F^{(n+1)} = \vec F^{(n)} + \Delta \vec\alpha_i^{(n)}\vec K\vec{e(i)}
\end{align}
where $\vec{e(i)}$ is the $i^{th}$ standard basis vector.
The descent direction, as stated previously, is determined greedily by selecting the point with the most negative margin:
\begin{equation}
	i = \argmin_j{\left(m^{(n)}(\mathcal{X}_j)\right)} \label{minMargin}
\end{equation}
This descent rule guarantees positive margin for all points in $\mathcal{X}$ in a finite number of iterations.
%This update rule decreases the loss by $\frac{1}{2}{\Delta\vec\alpha_i^{(n)}}^2 = \frac{1}{2}\left(b_i - m^{(n)}(\mathcal{X}_i)\right)^2$.% Note that for $\beta > 1$, weight updates are larger for misclassified $\mathcal{C}_{obs}$ points compared to misclassified $\mathcal{C}_{free}$ points, potentially increasing the probability of correctly classifying $\mathcal{C}_{obs}$ configurations and yielding a more conservative model.

%Selecting this descent direction when there is at least one misclassified training point at iteration $n$ guarantees $\mathcal{L}\left(\vec \alpha^{(n+1)}\right)\leq \mathcal{L}\left(\vec \alpha^{(n)}\right) - \frac{1}{2}$ because ${\Delta \vec\alpha_i^{(n)}}^2 \geq 1$ when $m^{(n)}(\mathcal{X}_i)\leq 0$. 

%\frac{1}{2}\left(b_i - m^{(n)}(\mathcal{X}_i)\right)^2
%Selecting this descent direction when there is at least one misclassified training point at iteration $n$ guarantees $\mathcal{L}\left(\vec \alpha^{(n+1)}\right)\leq \mathcal{L}\left(\vec \alpha^{(n)}\right) - \frac{1}{2}$ because ${\Delta \vec\alpha_i^{(n)}}^2 \geq 1$ since $b_i \geq 1$ and $m^{(n)}(\mathcal{X}_i)\leq 0$. This descent rule guarantees that the Fastron algorithm will find a model with positive margin for all points in $\mathcal{X}$ in a finite number of iterations.

\SetAlgoSkip{medskip}
\begin{algorithm}[ht]
\small

%\SetKwInput{Input}{Input}
\SetKwInput{Param}{Parameters}
\SetKw{logicalor}{OR}
\SetKw{continue}{continue}
\SetKw{break}{break}
\DontPrintSemicolon

% \KwIn{Initial weight vector $\vec{\alpha}$; initial hypothesis vector $\vec{F}$; training data $\mathcal{X}$; labels $\vec{y}$}     
 \KwIn{Dataset of configurations $\mathcal{X}$, collision status labels $\vec{y}$}
%\Param{Conditional bias parameter $\beta$, maximum number of iterations $iter_{max}$, maximum number of support points $\mathcal{S}_{max}$}
% \KwOut{Updated $\vec{\text{\textalpha}}$; updated $\vec{F}$}
  \KwOut{Updated $\vec{\alpha}$, support set of configurations $\mathcal{S}$}
  \tcp{Get values from previous update}
  $\vec{\alpha}, \vec{F}, \vec{K} \leftarrow \mathrm{loadPreviousModel}()$ \\
%  \tcp{Back up previous $\vec{\alpha}$ and $\vec{F}$}
%$\vec{\alpha_{before}} \leftarrow \vec{\alpha},\ \vec{F_{before}} \leftarrow \vec{F}$ \\
 \For {$iter = 1$ to $iter_{max}$}
 {
   \tcp{Check for misclassifications}
 \uIf{$\min{\vec{y}\vec{F}} \leq 0$}{
 		$i \leftarrow \argmin \vec{y}\vec{F}$ \\
        $\mathrm{computeKernelMatrixColumn}(i)$ \\%\tcp*{Compute column $j$ of $K$ if it has not yet been}
        \tcp{Add/adjust support point}
        \uIf {$\mathrm{count}(\vec{\alpha}\neq 0)<\mathcal{S}_{max}$ \logicalor $\vec{\alpha}_i\neq 0$}{
  		  $\Delta\vec{\alpha} \leftarrow b_i\vec{y}_i-\vec{F}_i$ \\
          $\vec{\alpha}_i \leftarrow \vec{\alpha}_i + \Delta \vec{\alpha}$ \\
          $\vec{F} \leftarrow \vec{F} + \Delta\vec{\alpha}\vec{K}\vec{e(i)}$ \\
%          $\vec{F}_i \leftarrow \vec{F}_i + \vec{K}_{ij}\Delta\vec{\alpha} \ \forall i$ \\
          \continue
        }
   }
%   \tcp{Back up $\vec{\alpha}$ and $\vec{F}$}
%	$\vec{\alpha_{before}} \leftarrow \vec{\alpha},\ \vec{F_{before}} \leftarrow \vec{F}$ \\
   \tcp{Remove redundant support points}
   \If{$\max{\vec{y} (\vec{F} - \vec{\alpha})} > 0 \textup{ subject to } \vec{\alpha}\neq 0$}{
     $i \leftarrow \argmax \vec{y}(\vec{F} - \vec{\alpha})$ subject to $\vec{\alpha}\neq 0$\\
     $\vec{F} \leftarrow \vec{F} - \vec{\alpha}_i\vec{K}\vec{e(i)}$\\
     $\vec{\alpha}_i \leftarrow 0$ \\
     \continue
   }
%   \tcp{No further changes can be made}
   \break
 }
% \uIf{$\min_i{\vec{y}_i\vec{F}_i} \leq 0$ \and exists($\vec{\text{\textalpha}}_{full}$)}
% {
% $\vec{\text{\textalpha}}\leftarrow\vec{\text{\textalpha}}_{full},\ \vec{F}\leftarrow \vec{F}_{full}$} 

% \tcp{Revert solution if prior was better}
%  \uIf{$\mathrm{count}({\vec{y}\vec{F_{before}}} \leq 0) < \mathrm{count}({\vec{y}\vec{F}} \leq 0)$}
% {
%  $\vec{\alpha}\leftarrow\vec{{\alpha}_{before}},\ \vec{F}\leftarrow \vec{F_{before}}$ \label{removalSafeguard}}

\tcp{Remove all elements corresponding to non-support points}
 $\alpha, \mathcal{S}, \vec{F}, \vec{K} \leftarrow \textup{removeNonsupportPoints}()$ \label{removeNonsupportPoints} \\

 \textbf{return} {$\vec{\alpha}$, $\mathcal{S}$}
 
 \caption{\strut Fastron Model Update Algorithm}
\label{alg:fastronUpdate}
\end{algorithm}
%\textfloatsep}{3pt}% Remove \textfloatsep

\begin{claim}
Minimization of $\mathcal{L}(\vec{\alpha})$ with the greedy coordinate descent rule defined in Eq. \ref{updateRule} and \ref{minMargin} will always eventually yield a model with positive margin for all samples given nonsingular kernel matrix $\vec K$.
\label{claim:convergence}
\end{claim}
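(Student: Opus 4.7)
The plan is to exhibit a uniform positive lower bound on the decrease in $\mathcal{L}$ at every non-terminating iteration, and then invoke the fact that $\mathcal{L}$ is bounded below to conclude that only finitely many such iterations can occur. Since the algorithm terminates exactly when all margins are positive, this yields the claim.

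First I would note that for both kernels used in the paper, the diagonal of $\vec K$ is identically one: $K_{RQ}(\vec x,\vec x)=1$ by Eq.~\ref{eq:rbf}, and consequently $K_{FK}(\vec x,\vec x)=\tfrac{1}{M}\sum_m K_{RQ}(FK_m(\vec x),FK_m(\vec x))=1$ by Eq.~\ref{eq:fkKernel}. Next, using $\vec F=\vec K\vec\alpha$, a direct expansion of $\mathcal{L}$ shows that updating coordinate $i$ by $\Delta\alpha_i$ changes the loss by
\[
\Delta\alpha_i(\vec F_i-b_i\vec y_i)+\tfrac{1}{2}\vec K_{ii}(\Delta\alpha_i)^2.
\]
Substituting the prescribed update $\Delta\alpha_i=b_i\vec y_i-\vec F_i$ together with $\vec K_{ii}=1$, this collapses to $-\tfrac{1}{2}(\Delta\alpha_i)^2$, so the loss strictly decreases by $\tfrac{1}{2}(\Delta\alpha_i)^2$ at every step.

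Then I would lower-bound $|\Delta\alpha_i|$ whenever the greedy rule fires on a non-positive margin. Multiplying the update by $\vec y_i$ yields $\vec y_i\Delta\alpha_i=b_i-\vec y_i\vec F_i$. Since $b_i=\beta^{0.5(\vec y_i+1)}\geq 1$ (because $\beta\geq 1$), and Eq.~\ref{minMargin} selects $i$ with $\vec y_i\vec F_i\leq 0$ whenever the algorithm has not yet terminated, we obtain $\vec y_i\Delta\alpha_i\geq 1$ and hence $|\Delta\alpha_i|\geq 1$. Each non-terminating iteration therefore drops $\mathcal{L}$ by at least $\tfrac{1}{2}$.

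Finally, since $\vec K$ is nonsingular and PD, $\mathcal{L}$ is strictly convex with the unique minimizer $\vec\alpha^{*}=\vec K^{-1}\vec B\vec y$, and in particular is bounded below by $\mathcal{L}(\vec\alpha^{*})$. Combining this with the per-iteration decrease bounds the number of non-terminating iterations by $2\bigl(\mathcal{L}(\vec\alpha^{(0)})-\mathcal{L}(\vec\alpha^{*})\bigr)$, which is finite. The main step is recognizing that $b_i\geq 1$ together with $\vec K_{ii}=1$ delivers an iteration-independent lower bound on $|\Delta\alpha_i|$; once that is in place, termination with positive margin for all samples follows immediately from boundedness below of the strictly convex quadratic loss, and nonsingularity of $\vec K$ enters only to secure that lower bound.
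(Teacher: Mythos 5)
Your proposal is correct and follows essentially the same route as the paper's proof: a per-iteration loss decrease of at least $\tfrac{1}{2}$ (the paper writes this as $-\tfrac{1}{2}(b_i - m^{(n)}(\mathcal{X}_i))^2 \leq -\tfrac{1}{2}$, which equals your $-\tfrac{1}{2}(\Delta\alpha_i)^2$), combined with the lower bound $\mathcal{L}(\vec K^{-1}\vec B\vec y)$ available for nonsingular $\vec K$, to bound the number of non-terminating iterations. You are in fact slightly more explicit than the paper in flagging that the collapse to $-\tfrac{1}{2}(\Delta\alpha_i)^2$ relies on $\vec K_{ii}=1$, which holds for both the RBF and FK kernels as defined.
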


\begin{proof}
If $\min_i{m^{(n)}(\mathcal{X}_i)}\leq 0$, an upper bound on the change in loss per descent step is $\sup\left(\mathcal{L}\left(\vec{\alpha}^{(n+1)}\right) - \mathcal{L}\left(\vec{\alpha}^{(n)}\right)\right)= \sup\left(-\frac{1}{2}(b_i-{m^{(n)}(\mathcal{X}_i)})^2\right)= -\frac{1}{2}$. A lower bound of $\mathcal{L}(\vec \alpha)$ is $\inf\mathcal{L}(\vec \alpha) = \mathcal{L}(\vec{K}^{-1}\vec{B}\vec{y}) = -\frac{1}{2}\vec{y}^\mathsf{T}\vec{B}\vec{K}^{-1}\vec{B}\vec{y}$ for nonsingular $\vec{K}$. A loose upper bound on the number of descent steps required to reach $\mathcal{L}(\vec{K}^{-1}\vec{B}\vec{y})$ from initial loss $\mathcal{L}(\vec{\alpha}^{(0)})$ is thus $\frac{\mathcal{L}(\vec{K}^{-1}\vec{B}\vec{y})-\mathcal{L}(\vec{\alpha}^{(0)})}{-\frac{1}{2}} = \vec{y}^\mathsf{T}\vec{B}\vec{K}^{-1}\vec{B}\vec{y} + 2\mathcal{L}(\vec{\alpha}^{(0)})$. The margin is $b_i$ for $\mathcal{X}_i$ when $\vec{\alpha} = \vec{K}^{-1}\vec{B}\vec{y}$, which is positive because $b_i \geq 1$. 

If $\min_i{m^{(n)}(\mathcal{X}_i)} > 0$, the model at iteration $n$ successfully provides a positive margin for all samples.
\end{proof}

%\begin{figure*}[b]
%	\centering
%    \subfloat[Grid search for RBF kernel\label{fig:rbfKernelGridSearch}]{%
%        \includegraphics[width=0.48\linewidth, trim={0.5cm 0.8cm 2.3cm 1.6cm}, clip]{}}
%        \hfill
%    \subfloat[Grid search for FK kernel\label{fig:fkKernelGridSearch}]{%
%        \includegraphics[width=0.48\linewidth, trim={0.5cm 0.8cm 2.3cm 1.6cm}, clip]{}}
%    \caption{Grid search over the kernel parameter $\gamma$ and conditional bias $\beta$ parameters for each kernel. Each cell provides (from top to bottom) the accuracy, true positive rate, and true negative rate for each pair of parameters. The cell outlined in red denotes the set of parameters used in further experiments.}
%    \label{fig:gridSearch}
%\end{figure*}

Claim \ref{claim:convergence} means the weight update algorithm can terminate once all training samples have positive margin or will otherwise work toward achieving positive margin for all samples. Claim \ref{claim:convergence} is contigent on $\vec K$ being nonsingular, which is shown to be the case for the FK kernel in Claim \ref{claim:pdFKkernel}. %which is always the case when using a PD kernel and each configuration in $\mathcal{X}$ is unique.

After finding a model with positive margin for all training points, points in the training set with nonzero value in $\vec\alpha$ comprise the support set $\mathcal{S}\subseteq \mathcal{X}$. To promote model sparsity, support points are removed from $\mathcal{S}$ (by setting their corresponding weight to 0) if they would have positive margin even after their removal from $\mathcal{S}$.

The Fastron model update algorithm is summarized in Algorithm \ref{alg:fastronUpdate}. An iteration limit ${iter}_{max}$ and a limit on the size of the support set $\mathcal{S}_{max}$ can be set to limit update times and model sizes at the cost of model accuracy.  $\mathrm{loadPreviousModel}()$ initializes the values in each data structure to 0 or loads a previously-trained model if it exists, $\mathrm{computeKernelMatrixColumn}(i)$ computes the $i^{th}$ column of $\vec K$, and $\mathrm{removeNonsupportPoints}()$ removes elements from any data structure corresponding to a non-support point because these points are not needed for classification.

After model updating, a two-stage active learning strategy searches for changes in the C-space in case workspace obstacles move. In the first stage, random samples are generated near each support point to search for small perturbations to the C-space obstacles. In the second stage, uniformly random samples are generated in the C-space to search for new obstacles entering the workspace of the robot. A geometric collision check is performed for each support point and new sample generated via active learning to update $\vec y$. With the updated training set, the model update procedure is repeated.

\section{Results}
In this section, we evaluate the performance of the FK kernel by examining how it improves proxy collision checking and motion planning. We use the 7 DOF arms of the Baxter robot for these experiments. We use Fastron with the proposed FK kernel and with the original RBF kernel, denoted as Fastron FK and Fastron RBF, respectively. We select the parameters for Fastron FK and Fastron RBF with a grid search. The geometric collision detectors compared are GJK \cite{Gilbert1988} and FCL \cite{Pan2012}. GJK can determine whether two convex shapes are intersecting, and FCL is the collision detection library used in the MoveIt! motion planning framework \cite{Sucan}. We use GJK as a faster, approximate geometric method and use FCL as the ground truth collision method.

\subsection{Collision Checking}
\label{sec:colCheckSection}

\begin{table}[!b]
\footnotesize
\setlength{\tabcolsep}{6pt}
\centering
\ra{1.3}
\begin{tabularx}{\linewidth}{@{}Xlll@{}}
\toprule
Method & Query Time & $|\mathcal{S}|$ & Update Time \\
\midrule
Fastron FK& $2.5\pm 0.5\ \mu s$& $302.5\pm 109.6$& $55.8\pm 4.8\ ms$ \\
Fastron RBF& $4.6\pm 1.1\ \mu s$& $2368.9\pm 583.6$& $129.4\pm 30.2\ ms$ \\
GJK & $14.6\pm 0.2\ \mu s$& --- & --- \\
FCL & $22.9\pm 0.3\ \mu s$& --- & --- \\
\bottomrule
\end{tabularx}
\caption{Performance of Fastron FK and Fastron RBF compared against geometric collision detection methods (GJK and FCL). Lower is better for query time, model size $|\mathcal{S}|$, and model update time.}
\label{table:colCheckResults}
\end{table}

To compare collision checking performance, we compare model correctness, query times, model sizes, and model update times. Measures of model correctness include the overall classification accuracy, true positive rate (TPR), and true negative rate (TNR), defined as
\begin{align}
\mathrm{Accuracy} &= \frac{TP + TN}{TP + FN + TN + FP} \\
\mathrm{TPR} &= \frac{TP}{TP + FN} \\
\mathrm{TNR} &= \frac{TN}{TN + FP}
\end{align}
where $TP$ and $FP$ represent the number of samples in the test set that are correctly and incorrectly predicted to be in $\mathcal{C}_{obs}$, respectively, and $TN$ and $FN$ represent the same but for the $\mathcal{C}_{free}$ class. TPR and TNR are useful measures of correctness when the volumes of $\mathcal{C}_{obs}$ and $\mathcal{C}_{free}$ are unbalanced. Model size $|\mathcal{S}|$ is the count of support points required to represent the model for the proxy collision check methods. The results of the experiments are averaged over 50 trials, where each trial uses one randomly-sized obstacle that moves up to 30 times in a random direction, and the models are updated between each obstacle movement.

\begin{figure}[t]
	\includegraphics[width=\linewidth, trim={1.1cm 0.1cm 1.4cm 0.1cm}, clip]{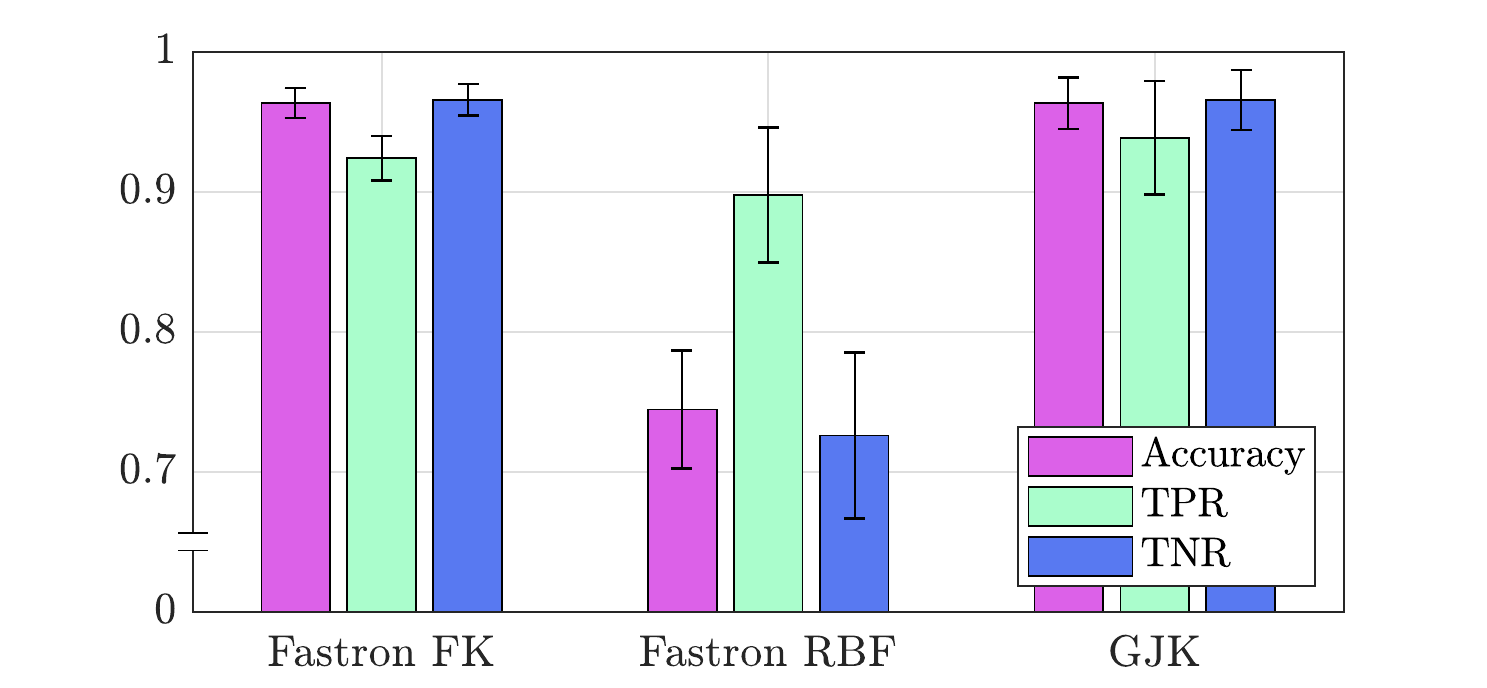}
	\caption{A comparison of the correctness of Fastron FK, Fastron RBF \cite{Das2019}, and GJK \cite{Gilbert1988} with a single, randomly-sized moving obstacle. Fastron FK has much better accuracy, TPR, and TNR than Fastron RBF. In fact, Fastron FK performs almost as well as GJK, a geometric collision checker.}
	\label{fig:accuracyPlot}
\end{figure}

\begin{figure}[b]
	\centering
    \subfloat[RBF Kernel] {
        \includegraphics[width=0.48\linewidth, trim={10cm 11cm 10cm 3cm}, clip]{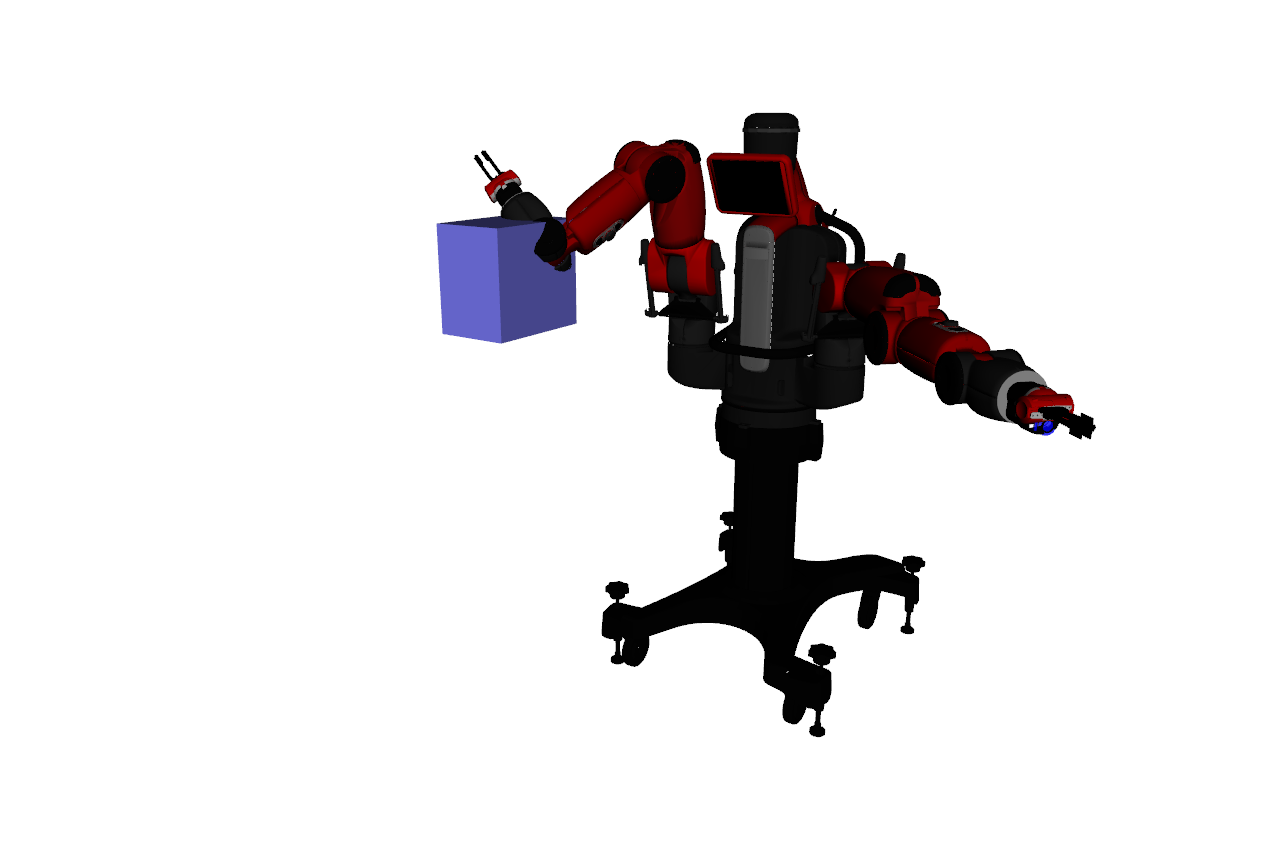}}
		\hfill
    \subfloat[FK Kernel] {
        \includegraphics[width=0.48\linewidth, trim={10cm 11cm 10cm 3cm}, clip]{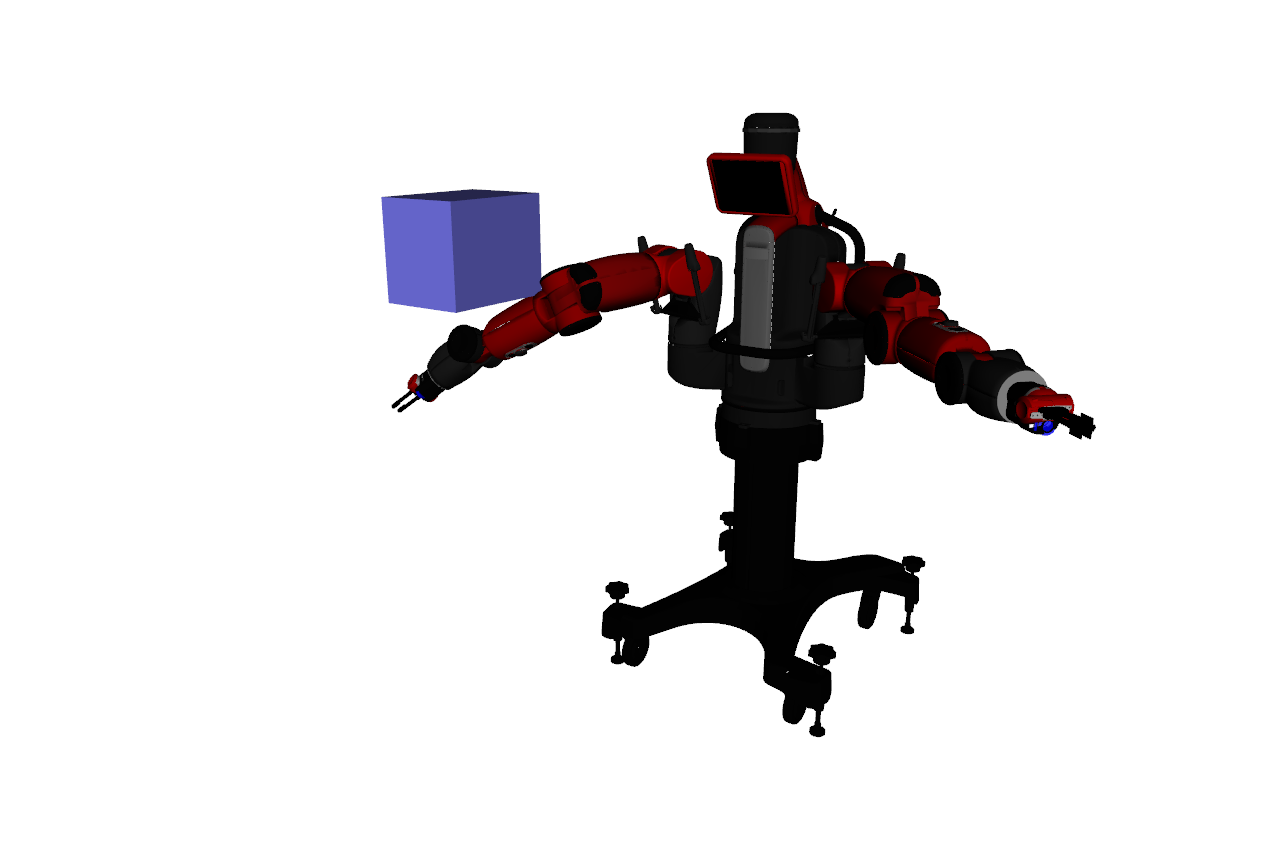}}
    \caption{Typical failure cases using the RBF kernel and the FK kernel, where the configuration should be labeled as $\mathcal{C}_{obs}$ but the Fastron model predicts $\mathcal{C}_{free}$. With the RBF kernel, the robot blatantly intersects with the workspace obstacle. With the FK kernel, the robot slightly intersects with the obstacle.}
    \label{fig:failureCases}
\end{figure}

\begin{figure*}[!hb]
	\centering
    \subfloat[RRT\label{fig:rrtstar}]{
        \includegraphics[width=0.32\linewidth, trim={0.6cm 0.4cm 1.3cm 0cm}, clip]{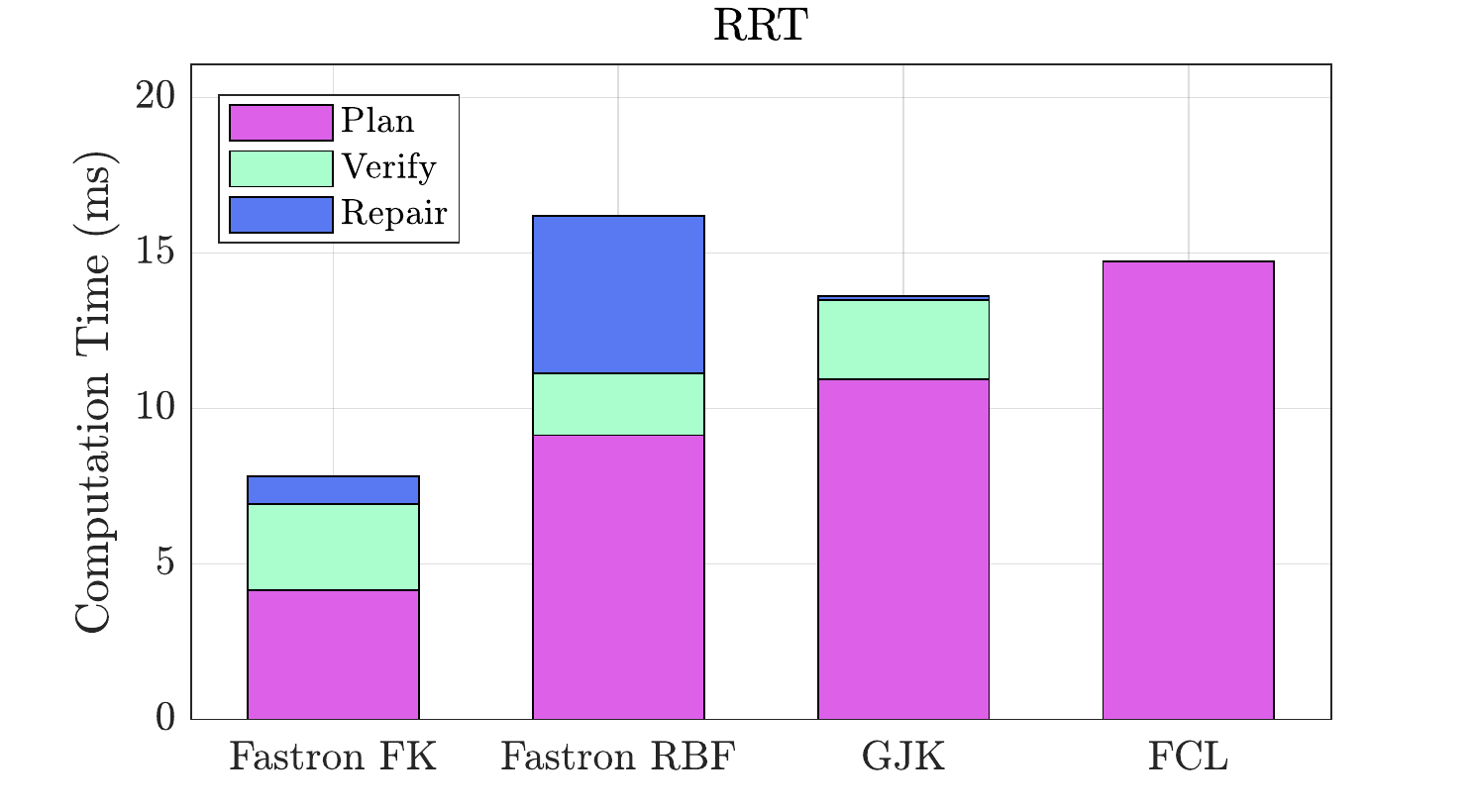}}
        \hfill
    \subfloat[RRT-Connect\label{fig:rrtconnect}] {
        \includegraphics[width=0.32\linewidth, trim={0.6cm 0.4cm 1.3cm 0cm}, clip]{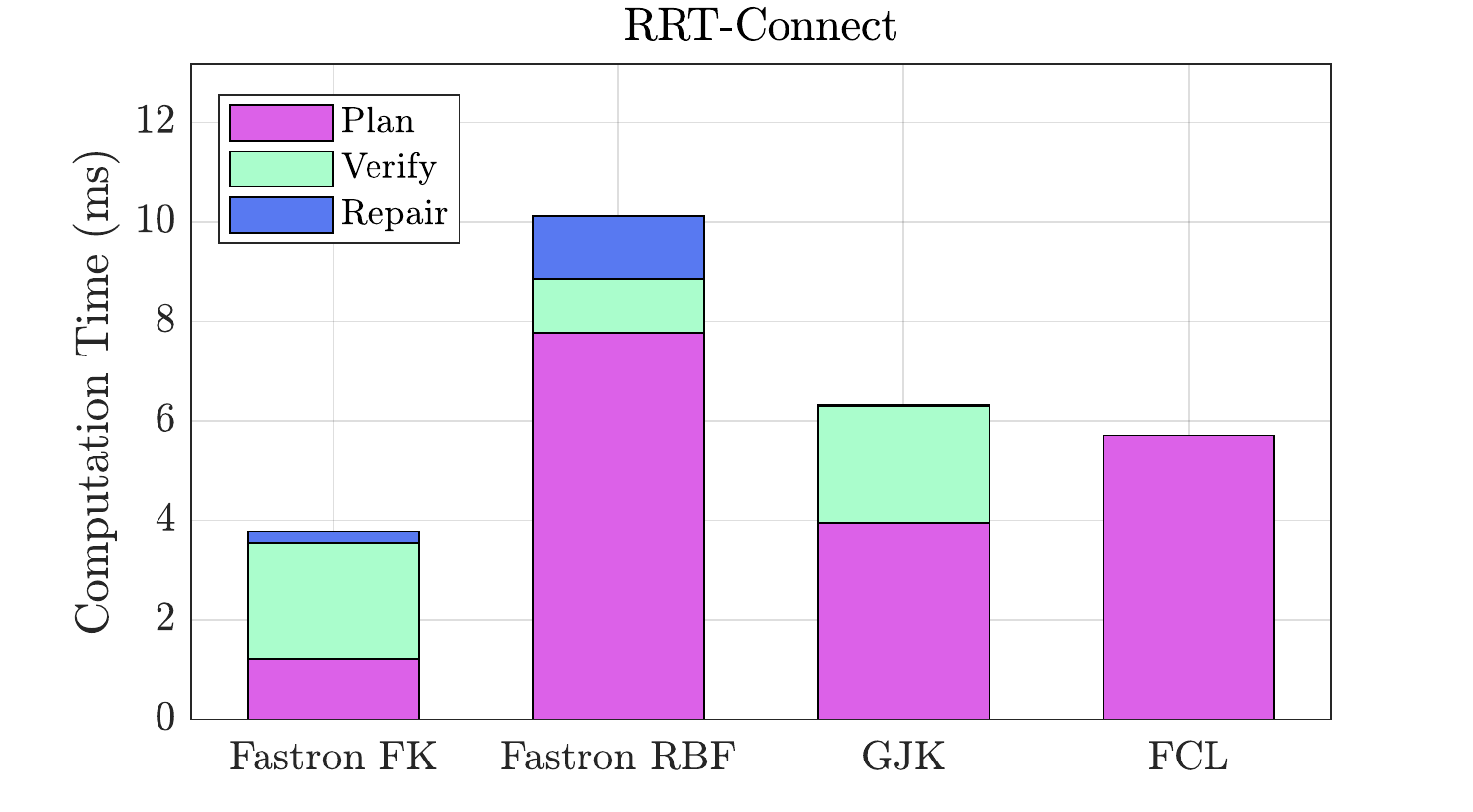}}
		\hfill
	\subfloat[SBL] {
        \includegraphics[width=0.32\linewidth, trim={0.6cm 0.4cm 1.3cm 0cm}, clip]{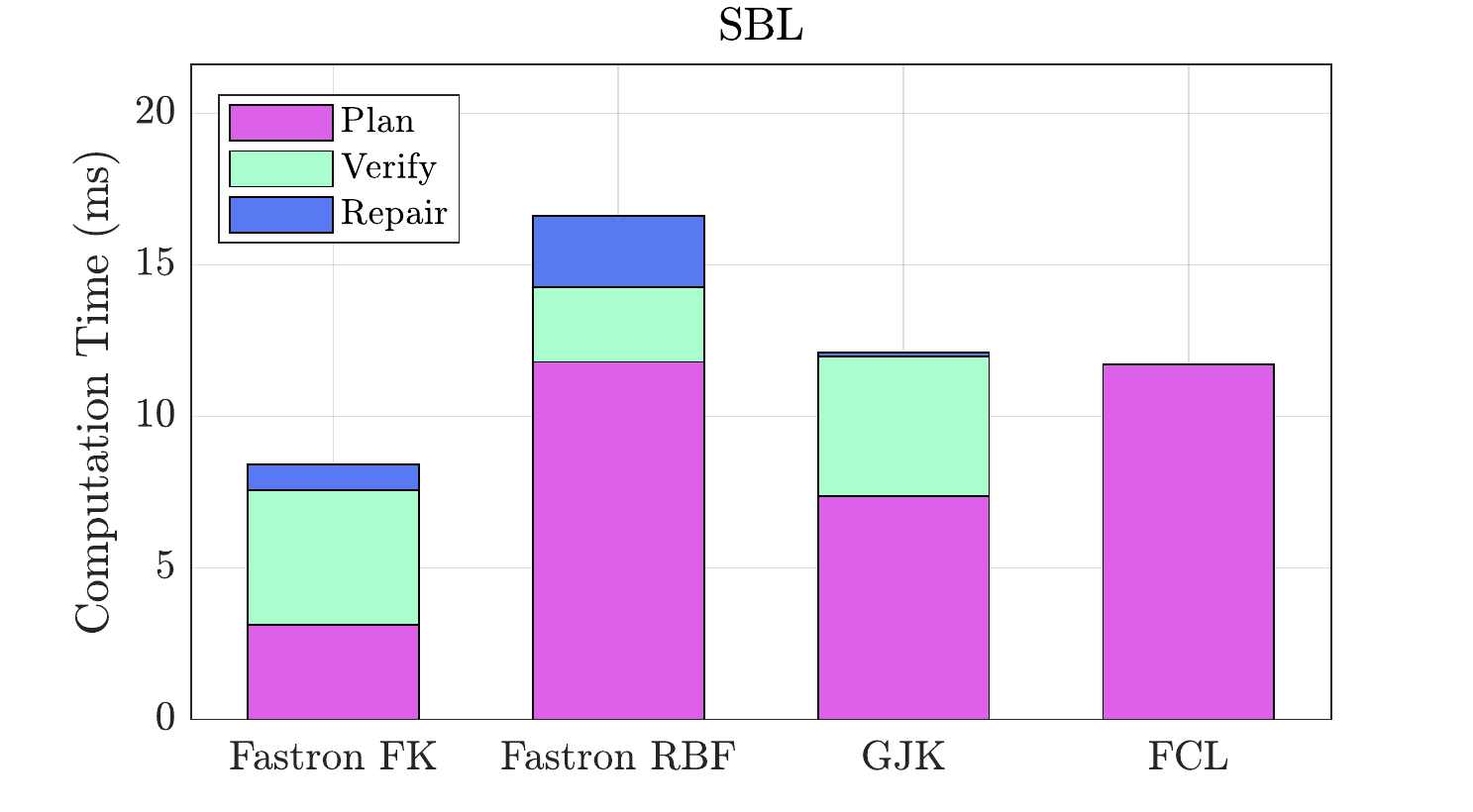}}
        \hfill
    \subfloat[RRT*] {
        \includegraphics[width=0.32\linewidth, trim={0.6cm 0.4cm 1.3cm 0cm}, clip]{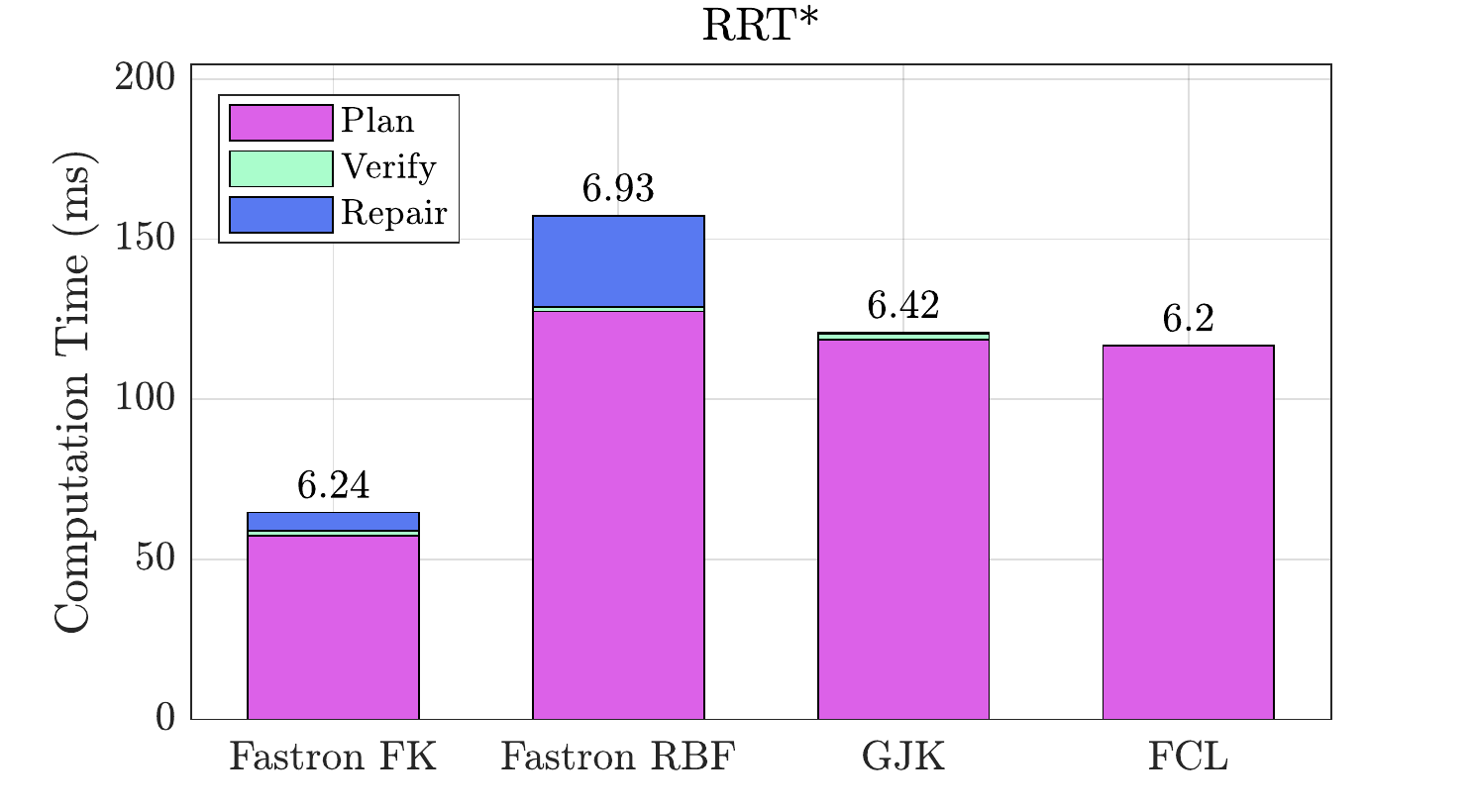}}
		\hfill
    \subfloat[FMT*] {
        \includegraphics[width=0.32\linewidth, trim={0.6cm 0.4cm 1.3cm 0cm}, clip]{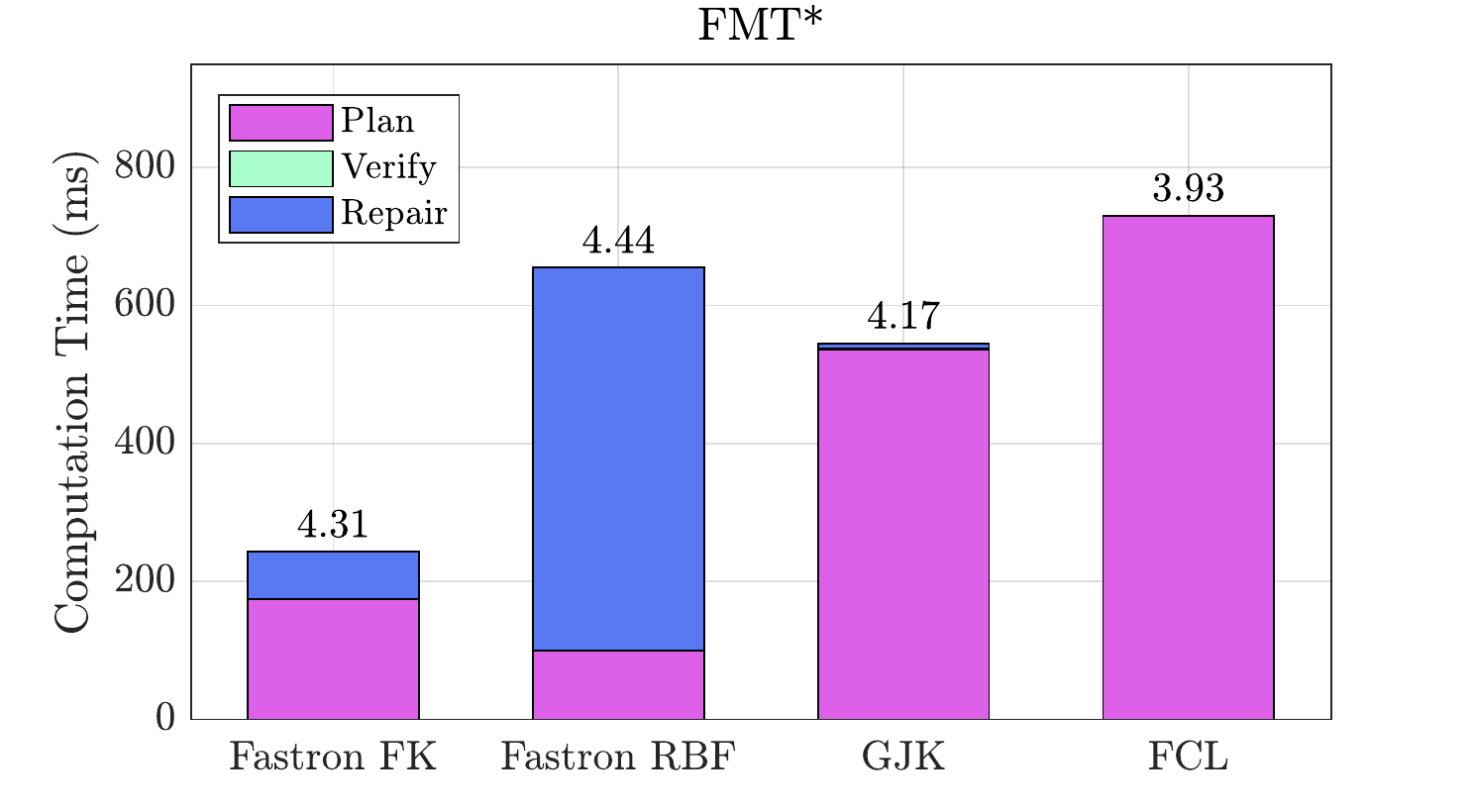}}       
		\hfill
	\subfloat[Informed RRT*] {
        \includegraphics[width=0.32\linewidth, trim={0.6cm 0.4cm 1.3cm 0cm}, clip]{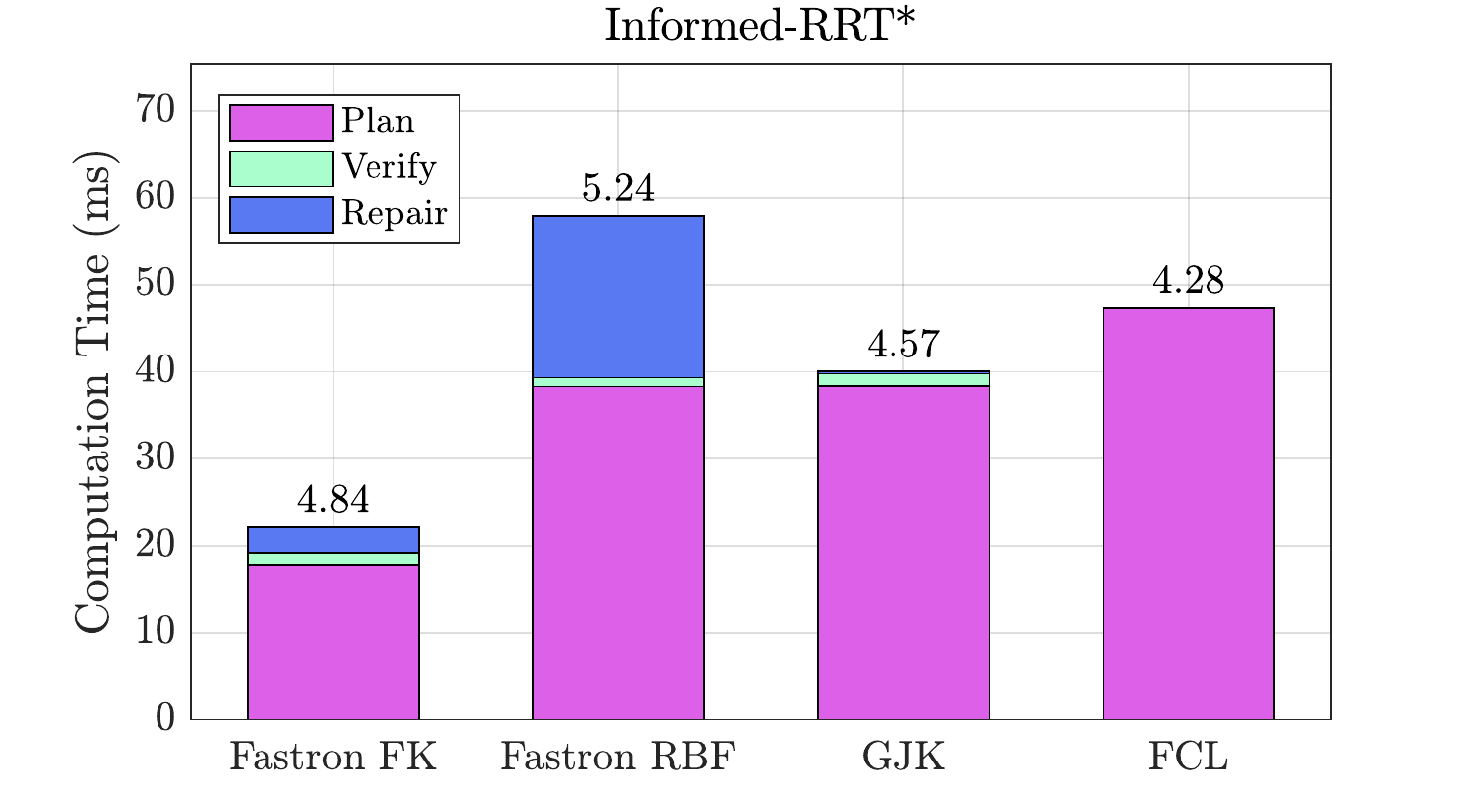}}

    \caption{Motion planning results using various collision detection methods: Fastron with FK kernel, Fastron with RBF kernel, GJK, and FCL (which serves as the ground truth collision detector). The obstacle in the workspace is repeatedly moved. For each approximated collision detection method, the path is validated and fixed using FCL in the Verify and Repair steps. The numbers  above each optimizing planner's bars are the average lengths of the initial motion plans.}
    \label{fig:motionPlanningChangingResults}
\end{figure*}

\begin{figure}[t!]
\includegraphics[width=\linewidth, trim={0cm 0cm 0.3cm 0cm}, clip]{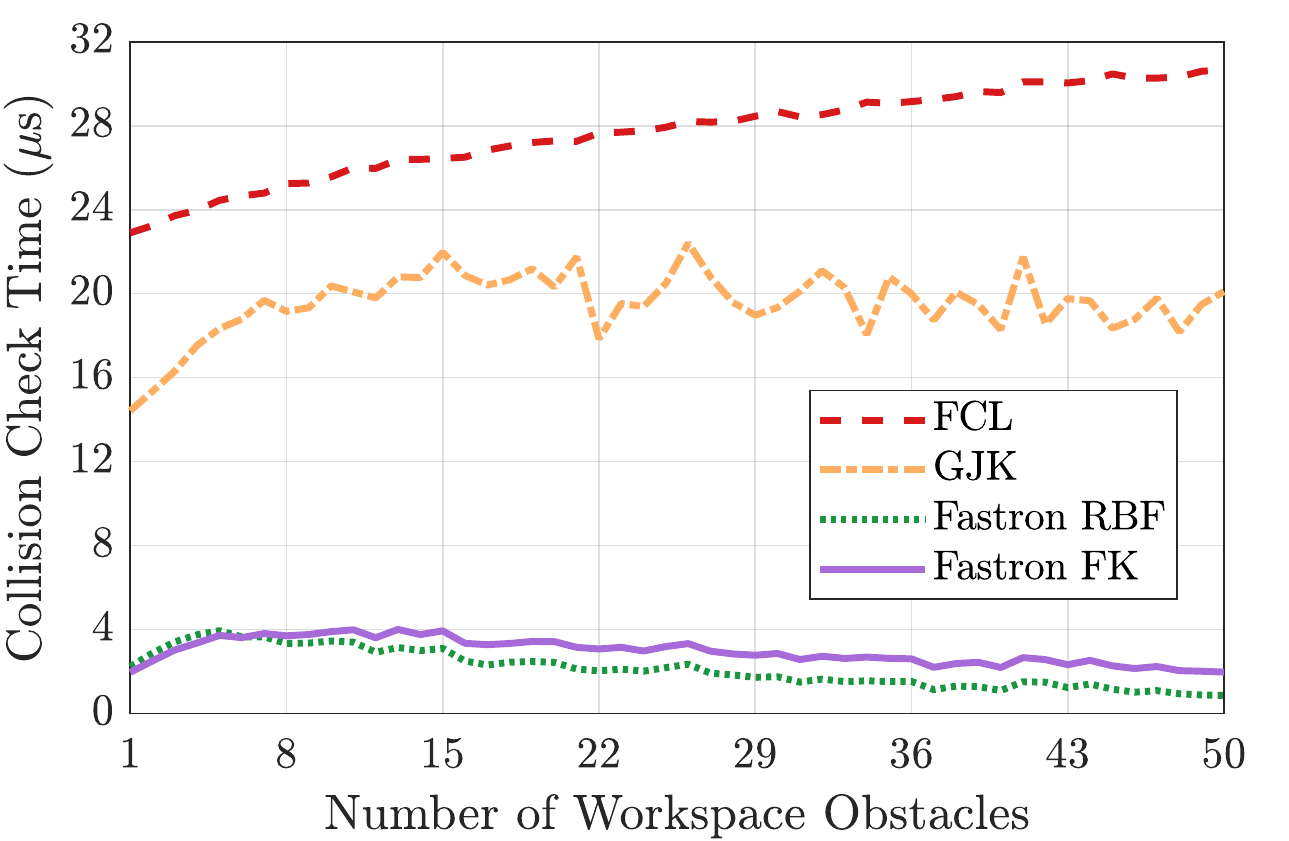}
\caption{Query times for Fastron FK, Fastron RBF \cite{Das2019}, GJK \cite{Gilbert1988}, and FCL \cite{Pan2012} against the number of workspace obstacles. Both GJK and FCL increase in query times as the number of obstacles increase, while both Fastron methods increase slightly before decreasing because the number of support points required to represent the space are the most when the class sizes are more balanced.}
\label{fig:numObsSweep}
\end{figure}

The query timings, model sizes, and model update timings are tabulated in Table \ref{table:colCheckResults}. The initial model training (i.e., labeling 5000 random configurations using FCL and training the model starting from $\vec\alpha = 0$) took on average 134.0 ms and 164.8 ms for Fastron FK and Fastron RBF, respectively. The majority of this initial training time (approximately 115 ms) for both methods is dedicated toward labeling the dataset using FCL. While Fastron FK has 7.8 times fewer support points than Fastron RBF, the FK kernel is more expensive to compute than the RBF kernel, causing Fastron FK to be only 1.8 times faster than Fastron RBF for query times. Fastron FK beats GJK and FCL by a factor of 5.8 and 9.2, respectively, for query times. Since there are fewer support points in the Fastron FK model, the update time (largely throttled by labeling new data via FCL) is about 2.3 times faster for Fastron FK than for Fastron RBF.

The correctness of the approximate collision methods are shown in Fig. \ref{fig:accuracyPlot}, illustrating the advantages of Fastron FK over Fastron RBF. The accuracy, TPR, and TNR of Fastron FK are much higher than those of Fastron RBF. In fact, Fastron FK performs similarly in terms of correctness to GJK, but is 5.8 times faster. Fastron FK achieves an accuracy of 96.4\%, a significant improvement over Fastron RBF's accuracy of 74.5\%. While the accuracy, TPR, and TNR are roughly balanced for Fastron FK, Fastron RBF's TPR is much higher than its accuracy and TNR because it prioritizes the $\mathcal{C}_{obs}$ label over the $\mathcal{C}_{free}$ label. While biasing toward the $\mathcal{C}_{obs}$ label yields a more conservative model, it also reduces the volume of the valid region of C-space and increases search time during sampling-based motion planning, which we will see is the case for Fastron RBF in the next section.

Fig. \ref{fig:failureCases} shows typical failure cases using Fastron FK and Fastron RBF, where the true collision status is $\mathcal{C}_{obs}$ but the Fastron models incorrectly predict $\mathcal{C}_{free}$. Using the RBF kernel, the robot often significantly intersects with the workspace obstacle, especially at its more distal links. Using the FK kernel, the robot usually slightly intersects with the workspace obstacles in its failure cases. A slight intersection with workspace obstacles is less severe than blatantly penetrating the obstacle, demonstrating the benefit of comparing control points in the FK kernel over comparing joint values directly with the RBF kernel.

In an additional experiment, we increase the number of obstacles to see the effect on query times. Fig. \ref{fig:numObsSweep} shows the query times for each method with respect to the number of workspace obstacles for up to 50 randomly-placed obstacles. We can see that the geometry-based methods, GJK and FCL, increase in query times as the number of obstacles increase. On the other hand, Fastron-based query times first increase and then decrease because fewer support points are required when one collision status is more prevalent than the other.

%On the other hand, Fastron-based query times using either kernel first increases before decreasing. The query times eventually decrease because fewer support points are required when one collision status is more prevalent than the other. The query times are lower for Fastron RBF than for Fastron FK after 6 obstacles because the FK kernel is more expensive to compute than the RBF kernel and the number of support points required by Fastron RBF drops more rapidly than when using the FK kernel.

\subsection{Motion Planning}
We apply Fastron FK to a broad collection of the sampling-based motion planning algorithms implemented in OMPL \cite{Sucan2012}. We select RRT \cite{LaValle2008}, RRT-Connect \cite{Kuffner2000}, SBL \cite{Sanchez2003}, RRT* \cite{Karaman2011}, FMT* \cite{Starek2015}, and Informed-RRT* \cite{Gammell2014} to demonstrate each collision checking method's performance. RRT, its bidirectional variant RRT-Connect, and SBL are probabilistically complete motion planners \cite{Choset2005} that terminate once a path between start and goal is found. SBL includes lazy collision checking, which only calls the collision check routine when absolutely necessary. The other planners are optimizing planners that attempt to determine the shortest collision-free path between start and goal. Rather than letting the optimizing planners run until timeout (which would cause all planning times to be equal), we stop these planners once an initial feasible path has been found. While these paths may not be close to optimal, we include the average lengths of these initial paths in the plot of results.

We use similar environments as used in Section \ref{sec:colCheckSection}, but the obstacle is sized such that the arm must always move around it. The start and goal configurations are selected to be on opposite sides of the workspace obstacle. Each collision checker is applied to each motion planner for 50 trials, where a trial consists of up to 30 random movements of the workspace obstacle, and the motion planner is repeated 10 times for each obstacle position. When using the approximate collision checkers, we verify and repair (if necessary) each plan using FCL by replanning with FCL on the invalid part of the plan. The resulting motion plan is thus guaranteed to be collision-free according to FCL.

The average motion planning timings are shown in Fig. \ref{fig:motionPlanningChangingResults}. Even with the verify and repair steps, Fastron FK provides the fastest motion plans, providing up to 3 times faster collision-free motion plans compared to when using FCL. On the other hand, Fastron RBF often struggles to find a feasible path because it overestimates the $\mathcal{C}_{obs}$ subspace, causing its planning time to often be large or hit the time limit. Furthermore, the repair times for Fastron RBF are significantly larger than those for Fastron FK and GJK, illustrating that Fastron RBF often yields invalid paths.

%Note that having a faster proxy collision detector does not necessarily equate to faster motion plans. Having a conservative collision checking model (with high TPR and lower TNR) may cause the volume of the predicted $\mathcal{C}_{free}$ space to be smaller than reality, increasing the the time to find a valid path. On the other hand, if TPR is too low, the path that is found may include invalid waypoints and will cause a longer repair time.

%The verify step takes a relatively large portion of the overall time for the non-optimal motion planners, i.e., RRT, RRT-Connect, and SBL. This causes the overall time to roughly double or triple for these motion planners when using Fastron FK. The verify step is negligible for the other motion planners.

\section{Conclusion}
In this paper we introduce the FK kernel, and we use it with the Fastron algorithm for proxy collision detection for manipulator arms. The FK kernel utilizes control points placed at each joint in a kinematic chain to better represent the relation between C-space and workspace. The FK kernel is positive definite, which guarantees that the Fastron algorithm will provide a model with positive margin for a dataset in a finite number of iterations.

Compared to the previously-used RBF kernel, the FK kernel required 8 times fewer support points to represent the configuration space model and allowed 2 times faster proxy collision detection. Prediction accuracy is significantly higher with the FK kernel (96\%) than with the RBF kernel (75\%). The FK kernel allowed proxy collision checking to be 9 times faster and motion planning to be up to 3 times faster than when using geometric methods.%Compared to the default geometric collision detector used in the MoveIt! motion planning framework, proxy collision checking with Fastron FK is 9 times faster. %These proxy collision checks allowed for up to 3 times faster collision-free motion plans.

Future work includes utilization of GPU parallelization for faster kernel evaluation and proxy collision checking and applications of the FK kernel in other kernel-based methods. %providing confidence scores for predictions.

\bibliographystyle{IEEEtran}
\bibliography{references}\end{document}